\documentclass[twoside,11pt]{article}

\usepackage{blindtext}

\usepackage{amsmath,amssymb,amsfonts, enumitem, fancyhdr, color, comment, graphicx, environ,changepage, mathrsfs}
\usepackage{quiver,tikz-cd}
\usepackage[inkscapelatex=false]{svg}
\usepackage{subcaption}
\usepackage{multicol}

\usepackage[preprint]{jmlr2e}

\newtheorem{prop}[theorem]{Proposition}
\newtheorem{coro}[theorem]{Corollary}

\def\gp{\mathcal{GP}}
\def\var{\text{Var}}
\def\cov{\text{Cov}}
\def\erf{\text{erf}}
\def\mean{\mathbb{E}}
\def\x{\mathbf{x}}
\def\y{\mathbf{y}}

\def\xb{\tilde{\mathbf{x}}}

\def\A{\mathbf{A}}
\def\bbeta{\boldsymbol{\beta}}
\def\bgamma{\boldsymbol{\gamma}}
\def\bphi{\boldsymbol{\phi}}
\def\bPhi{\boldsymbol{\Phi}}

\usepackage{lastpage}
\jmlrheading{23}{2024}{1-\pageref{LastPage}}{1/21; Revised 5/22}{9/22}{21-0000}{Kurt Butler, Guanchao Feng, and Petar M. Djuri\'{c}}

\ShortHeadings{Explainable Learning with Gaussian Processes}{Butler, Feng and Djuri\'{c} }
\firstpageno{1}

\begin{document}

\title{Explainable Learning with Gaussian Processes}

\author{\name Kurt Butler \email kurt.butler@stonybrook.edu \\
       \name Guanchao Feng \email guanchao.feng@stonybrook.edu \\
       \name Petar M. Djuri\'{c} \email petar.djuric@stonybrook.edu \\
       \addr Department of Electrical and Computer Engineering\\
       Stony Brook University\\
       Stony Brook, NY 11794, USA}

\editor{My editor}

 \maketitle

\begin{abstract}%
The field of explainable artificial intelligence (XAI) attempts to develop methods that provide insight into how complicated machine learning methods make predictions. Many methods of explanation have focused on the concept of feature attribution, a decomposition of the model's prediction into individual contributions corresponding to each input feature.
In this work, we explore the problem of feature attribution in the context of Gaussian process regression (GPR). We take a principled approach to defining attributions under model uncertainty, extending the existing literature. We show that although GPR is a highly flexible and non-parametric approach, we can derive interpretable, closed-form expressions for the feature attributions. When using integrated gradients as an attribution method, we show that the attributions of a GPR model also follow a Gaussian process distribution, which quantifies the uncertainty in attribution arising from uncertainty in the model. We demonstrate, both through theory and experimentation, the versatility and robustness of this approach. We also show that, when applicable, the exact expressions for GPR attributions are both more accurate and less computationally expensive than the approximations currently used in practice. 
The source code for this project is freely available under MIT license at \url{https://github.com/KurtButler/2024_attributions_paper}.
\end{abstract}

\begin{keywords}
explainable ai, feature attribution, gaussian processes, integrated gradients,  sparse approximation
\end{keywords}

\tableofcontents

\section{Introduction}
With the widespread adoption of machine learning (ML) methods across academia and industry, there has been an ever increasing desire to explain how complex artificial intelligence (AI) models make predictions \cite[]{samek2021explaining, angelov2021explainable}. The ability to explain AI models, in this sense, is important to promote user trust in AI systems, and to mitigate the misuse of dubious predictions \cite[]{hess2023protoshot}. However, the process of interpreting a model to gain physical insight is often difficult unless the model was intentionally designed to be interpretable \cite[]{peng2022xai}. Interpretable models are often transparent in how predictions are made, as is the case for linear models \cite[]{arrieta2020explainable} and shallow decision trees \cite[]{silva2020optimization}, or they contain components in their architecture that are designed for interpretability, such as physics-informed neural networks \cite[]{huang2022applications}. The sentiment that many ML methods are `black boxes' reflects the inherent difficulty of interpreting complicated, data-driven models like deep neural networks and kernel machines \cite[]{zhang2018black, rudin2019stop}. Despite this, the ML paradigm has remained immensely useful, and the exploration of post-hoc explanations for black box models has been a popular subject in recent years, often referred to as explainable artificial intelligence (XAI).

A wide variety of distinct notions of explainability exist in the literature \cite[]{jin2022explainable}. Many XAI approaches can be described as being either model-specific or model agnostic. Model-specific approaches attempt to formulate explanations by exploiting the known architecture of a ML model. In the context of neural networks, this includes methods that interpret activation weights \cite[]{springenberg2014striving,bach2015pixel}, graphs \cite[]{ying2019gnnexplainer}, or attention mechanisms \cite[]{voita2019analyzing} to explain how predictions were made. Model-specific approaches also include the use of models that are inherently interpretable, including decision trees \cite[]{silva2020optimization} and kernel machines with Automatic Relevance Determination (ARD) \cite{neal1996bayesian}. 

In contrast, model-agnostic approaches seek notions of explainability that can be applied to a wide range of models, with only generic assumptions about the type of data or properties of the model. Explanations of this type might measure the sensitivity of the prediction to perturbations at the input \cite[]{baehrens2010explain, simonyan2014deep} or measuring changes in the model prediction after masking or perturbing input features \cite[]{robnik2008explaining, strumbelj2009explaining}. In prediction tasks where images are inputs, a popular tool of this type is that of saliency maps, which are images that visually highlight the relevant features in a prediction \cite[]{adebayo2018sanity}.
Another class of methods attempt to explain a prediction by showing relevant similar examples that were observed in the training set, sometimes called example-based explanations
\cite{van2021evaluating}.
Reverse image searching approaches attempt to find similar training examples to a given prediction and have been of particular interest in the field of image diffusion models \cite[]{carlini2023extracting}. 
Finally, many XAI methods can be understood through the invocation of an approximate, interpretable model, as seen in Local Interpretable Model-agnostic Explanations (LIME) \cite[]{ribeiro2016model} and SHapley Additive exPlanations (SHAP) \cite[]{lundberg2017unified}.

Many XAI methods attempt to explain a prediction by associating a numerical score with each input feature. A natural approach to assigning this score is to attempt to measure the contribution of each feature to the output, that is, how much of the resulting prediction do we \emph{attribute} to each input feature. This approach to explainability, called \emph{feature attribution}, was popularized by \cite{strumbelj2014explaining}, who introduced Shapley values to XAI. Shapley values were proposed by \cite{shapley1953value} as a method for determining what fraction of the total profit each player is owed in a coalitional game. In the context of XAI, this was reinterpreted to quantify how much each feature contributed to a numerical prediction. Since then, several other feature attribution approaches have been proposed, which we review in the next section.

In this paper, we will focus on Gaussian process regression (GPR) as a general non-parametric machine learning paradigm and assess the utility of post-hoc explanations for GPR models. This setting is intriguing because (a) interesting theoretical results are obtained from computing the attributions of a GPR model and (b) the resulting form of the GPR result is heavily dependent on the observed data set. 
Since the GPR paradigm is data-driven, any post-hoc explanation in this setting must be compatible with the lack of explicit structural knowledge in the GPR approach. Acknowledging critiques of post-hoc explanations and the XAI paradigm \cite[]{rudin2019stop, ghassemi2021false, de2022perils}, we will argue that the attributions can remain useful to characterize aspects of the learned model, as long as the practitioners' expectations are tempered to be in line with what information can actually be recovered from the attributions.

\subsection{Related Work}
Since our primary notion of explainability in this work is feature attribution, we briefly highlight existing related work in this area. The most well-known approach to feature attribution is that of Shapley values, originally developed from game theoretic principles for econometric applications by \cite{shapley1953value}. Shapley values benefit from an axiomatic definition that uniquely defines them, but this original formulation is only valid for discrete features \cite[]{aumann2015values}. In continuous domains, the original axioms are insufficient to uniquely define the method \cite[]{sundararajan2020many}. Several extensions of the Shapley value to continuous features exist, including Aumann-Shapley \cite[]{aumann2015values} and Shapley-Shubik \cite[]{friedman1999three}. Other approaches to bring Shapley values to continuous domains have attempted the inclusion/deletion of features \cite{gromping2007estimators}, or through the invocation of a proxy model that locally explains the original model, including LIME \cite[]{ribeiro2016model} and SHAP \cite[]{lundberg2017unified}. The Integrated Gradients (IG) framework of \cite{sundararajan2017axiomatic} generalizes the framework of Aumann-Shapley. \cite{sturmfels2020visualizing} and \cite{lundberg2017unified} considered variations on the IG attribution theory where baselines are given by a distribution of possible baselines. 

While the majority of research in feature attribution has concentrated on neural models, there are authors who explore Gaussian process (GP) models. \cite{chau2023explaining} proposed a GP-based version of SHAP analysis to provide uncertainty quantification of Shapley values. \cite{seitz2022gradient} considered the application of IG attributions to GP models for regression and classification. Previous research has also focused on designing GPR models to be interpretable, typically through kernel design. This can be seen in the work on ARD \cite[]{neal1996bayesian} and additive kernels \cite[]{duvenaud2011additive}. \cite{yoshikawa2021gaussian} considered a GP-based hierarchical model in which predictions are made using conditionally linear models of interpretable features, with the model weights derived from nonlinear functions of other non-interpretable features.

\subsection{Contributions} 
This paper examines several aspects of attribution theory in the context of GP models. We show that the attributions represent a decomposition of the predictive model into a sum of component functions, and this decomposition preserves the differentiability of the original model. When a GP probability distribution is used to model the function space, we show that the attribution functions are also distributed according to GPs. 

We derive exact expressions for the attributions of various common GPR models. The resulting expressions are straightforward to implement and check in code. We also comment on extensions of these results to more sophisticated kernels and GPR models. 

Using the attribution GP, we gain the ability to measure the confidence of each attribution. We demonstrate through experimental data cases where these considerations are relevant to ML practitioners in medicine and industry. 

We also compare the performance of exact attributions to various approximations that have appeared in the literature. These approximations are often asymptotic, and to the knowledge of the authors no experimental work has been done to ascertain how the convergence of these approximate solutions relates to the measured attributions.

\section{The Feature Attribution Problem}
Consider the setting of multivariable regression. Given observations $y_n$, for $n=1,...,N$, of an unknown function with corresponding input feature vectors $\x_n$ in $\mathbb{R}^D$, the goal is to estimate a function $F:\mathbb{R}^D \to \mathbb{R}$ that approximates the data, that is, $y_n \approx F(\x_n)$. Once we have learned a particular model $F$ for the data, we can make new predictions, and we may be interested to know how each of the $D$ input features contribute to a given prediction. For example, if the function $F$ is supposed to represent a clinical prediction, such as the time needed to recovery from an injury, then we may be interested to know what features in the prediction model are the most important for determining the patient's outcome. The goal is that by understanding the relative importance of features used for prediction, we may be able to explain how the model leverages this information to make predictions. 

As stated, the explainability problem is still vague, and there have been various interpretations of this idea that make the problem more concrete.
In the {\sl baseline attribution problem} \cite[]{lundstrom2022rigorous}, one `explains' a prediction $F(\x)$ at a location $\x$ by measuring how it differs from the prediction at a baseline location $\xb$ (a reference point) and allocating a fraction of the observed difference to each input feature $x_i$. Generally, the attributions to each feature should add up to the difference between the prediction $F(\x)$ and the reference prediction $F(\xb)$, which is called the \emph{completeness} property. Many feature attribution approaches use a similar approach, most notably approaches based on Shapley values \cite[]{shapley1953value}. We will focus on the baseline attribution problem in this paper. 

\textbf{Notation:}
Let $F:\mathbb{R}^D \to \mathbb{R}$ be a function representing a regression model, $\x \in \mathbb{R}^D$ be a feature vector from which we are interested in making a prediction, and $\xb \in \mathbb{R}^D$ be a reference point. 
The attribution of the prediction to the feature $x_i$ is denoted by $attr_i(\x|F)$. When the function $F$ is understood, we may suppress it from the notation and write simply $attr_i(\x)$. 
Using this notation, the completeness  property can be written as
\begin{equation}
    \label{eq:complete}
F(\x)-F(\xb) = \sum_{i=1}^D attr_i(\x|F).
\end{equation}

We sometimes have the need to compute the attributions of a vector-valued function $\mathbf{F}(\x) = [F_1(\x) \cdots F_Q(\x)]^\top$. In this case, we define the attribution vectors $\mathbf{attr}_i(\x|\mathbf{F})$ by computing the attribution for each component of $\mathbf{F}$:
\begin{equation}
    \label{eq:vectorattr}
\mathbf{attr}_i(\x|\mathbf{F}) \stackrel{\Delta}{=} \begin{bmatrix}
    attr_i(\x|F_1) \\ \vdots \\ attr_i(\x|F_Q)
\end{bmatrix}, \qquad i = 1,...,D.
\end{equation}

Finally, we use the notation $C^k (\mathcal{X})$ to denote the set of all $k$-times continuously differentiable functions $\mathcal{X} \to \mathbb{R}$, where $k=1,2,...$ or $k=\infty$. We may write $C^k$ when the functions' domain is clear from context.

\subsection{Motivation: Bayesian Linear Regression}
\label{sec:bayeslinreg}
To motivate our work in this paper, it will be instructive to briefly consider the problem of Bayesian linear regression \cite[sec.11.7]{murphy2022probabilistic}. Our reasons are twofold: Firstly, linear models provide a canonical notion of attribution which is generalized by the IG method, and second, it explicates how uncertainty in the model gives rise to uncertainty in the attributions.

Consider the typical Bayesian linear regression model, in which predictions $y$ at a point $\x$ are given a model of the form 
$$
y|\x, \mathbf{w} \sim \mathcal{N}(\mathbf{w}^\top \x, \sigma^2),
$$
where the noise (or residual) is tacitly assumed to be zero-mean, and $\sigma^2$ is its  variance. Predictions from this model are made using the expected value, so our predictive function is simply $F(\x) = \mean(y|\x,\mathbf{w})= \mathbf{w}^\top \x$. After choosing an appropriate baseline point $\xb$, the difference of a given prediction to the baseline can always be expressed as a sum over the feature index $i$,
\begin{equation}
    \label{eq:lincomplete}
    \notag
F(\x) - F(\xb) = \mathbf{w}^\top (\x - \xb)
= \sum_{i=1}^D w_i (x_i - \tilde{x}_i).
\end{equation}
In this way, the quantity $attr_i(\x) = w_i(x_i-\tilde{x}_i)$ is the canonical choice for the attribution to the $x_i$ feature. Attributions of this form satisfy the completeness property, and are straightforward to interpret. The challenge of attribution methods for nonlinear models is to generalize this notion in a manner that is consistent with the linear case. 

Now let us consider the situation in which the model parameters $\mathbf{w}$ are random variables, and not fixed unknown parameters. In this way, attributions in a Bayesian setting should account for the uncertainty in the model. The proper way to do this is to take the attributions above, and to use the posterior distribution over $\mathbf{w}$ to quantify our uncertainty in $attr_i(\x)$. We now do this for a simple example. 

Suppose that we have a data set $\mathcal{D} = \{ (\x_n,y_n) | n= 1,...,N \}$, which we compactly represent in matrix form:
$$
\mathbf{X} \stackrel{\Delta}{=} \begin{bmatrix}
    \x_1^\top \\ \vdots \\ \x_N^\top
\end{bmatrix} \in \mathbb{R}^{N \times D},
\qquad 
\mathbf{y} \stackrel{\Delta}{=} \begin{bmatrix}
    y_1 \\ \vdots \\ y_N
\end{bmatrix} \in \mathbb{R}^{N}.
$$
If we assume a Gaussian prior for the model weights,
\begin{align}
    \mathbf{w} \sim \mathcal{N}( \boldsymbol{\mu}, \boldsymbol{\Sigma}),
\end{align}
for some fixed hyperparameters $\boldsymbol{\mu}, \boldsymbol{\Sigma}$, then the posterior distribution for $\mathbf{w}$ is also Gaussian and given by,
\begin{align}
    \mathbf{w}|\mathbf{X},\mathbf{y} &\sim \mathcal{N}( \boldsymbol{\mu}', \boldsymbol{\Sigma}'), \\
    \boldsymbol{\Sigma}' &=  \left(\boldsymbol{\Sigma}^{-1}  + \frac{1}{\sigma^2} \mathbf{X}^\top \mathbf{X}\right)^{-1}, \\
    \boldsymbol{\mu}' &= \boldsymbol{\Sigma}' \left(\boldsymbol{\Sigma}^{-1} \boldsymbol{\mu} + \frac{1}{\sigma^2} \mathbf{X}^\top \mathbf{y}\right).
\end{align}

With the posterior distribution in mind, and observing that $attr_i(\x) = w_i (x_i - \tilde{x}_i)$ is just a scaled copy of $w_i$, it becomes easy to see thet the marginal distribution for each attribution is given by
\begin{align}
    \label{eq:attrbayeslin}
    attr_i(\x) | \mathbf{X},\mathbf{y} &\sim \mathcal{N}( \mu_i' (x_i-\tilde{x}_i),  \Sigma_{ii}'(x_i -\tilde{x}_i)^2).
\end{align}
From this expression, we can immediately observe two important facts: the variance or uncertainty in the attribution comes from uncertainty in the model, and this uncertainty scales based on the distance from the baseline in the $x_i$ coordinate. In Section \ref{sec:thm}, we obtain similar results for the GPR model, which is a Bayesian non-parametric regression method. We note that the GPR case parallels the basic results shown here.

\subsection{Integrated Gradients}
Consider a function $F:\mathbb{R}^D \to \mathbb{R}$ which we use as a predictive model and for which we want to compute the attributions. The Integrated Gradient (IG) method defines the \emph{attribution} of the prediction to the $i$th input feature, given a specific input $\x$, to be
\begin{equation}
    \label{eq:IG}
    attr_i(\x|F) \stackrel{\Delta}{=} (x_i-\tilde{x}_i) \int_0^1 \frac{\partial F (\xb + t(\x-\xb))}{\partial x_i} dt.
\end{equation}
The expression in \eqref{eq:IG} can be motivated from two perspectives, one using geometric ideas and the other using axioms. Both perspectives are useful for gaining insight into how we interpret IG attributions.

\subsubsection{Geometric Interpretation}
The fundamental theorem of line integrals states that the line integral of a gradient equals the difference of the function evaluated at the endpoints \cite[p. 291]{lee2012smooth}. For a continuously differentiable function $F:\mathbb{R}^D \to \mathbb{R}$, 
$$
\int_\Gamma \nabla F \cdot d\x 
= F(\bgamma(1)) - F(\bgamma(0)),
$$
where 
$$
\nabla = \begin{bmatrix}
    \frac{\partial}{\partial x_1}
    & \cdots &
    \frac{\partial}{\partial x_D}
\end{bmatrix}^\top
$$
is the gradient operator, and $\Gamma$ is a path in $\mathbb{R}^D$ parameterized by $\bgamma(t)$ for $0\leq t \leq 1$. The left-hand side of the equation can be expanded to be expressed in terms of the partial derivatives, which reveals that the integral may be expressed as a sum of path integrals. In particular, if we define the path  $\bgamma(t) = \xb + t(\x-\xb)$, then we may write
\begin{align}
    \label{eq:gradsum}
\int_\Gamma \nabla F \cdot d\x 
= \int_0^1 \sum_{i=1}^D \frac{\partial F(\bgamma(t))}{\partial \gamma_i} \frac{d \gamma_i}{dt} dt
 &= \int_0^1 \sum_{i=1}^D \frac{\partial F(\bgamma(t))}{\partial \gamma_i}  (x_i - \tilde{x}_i)  dt
\\& = \sum_{i=1}^D  (x_i - \tilde{x}_i) \int_0^1  \frac{\partial F(\bgamma(t))}{\partial x_i}  dt.
\label{eq:ohlook}
\end{align}
The terms in the sum in \eqref{eq:ohlook} are precisely the IG attributions in \eqref{eq:IG}. This perspective allows us to see that the completeness property, \eqref{eq:complete}, is automatically satisfied for any attributions that are defined by a path, although we must pick a specific path to yield the IG attributions.
The integral $\int_0^1  \partial F/\partial x_i  dt$ measures the average value of $\partial F/\partial x_i$ between the baseline $\xb$ and the prediction at $\x$, and acts as a nonlinear analogue to the weights of a linear model.

If the function $F$ has constant partial derivatives, $\partial F/\partial x_i = w_i$, for each $i$, then $F$ is a linear function. In this case, $attr_i(\x) = (x_i-\tilde{x}_i) \int_0^1 \partial F/\partial x_i dt = w_i (x_i-\tilde{x}_i) $, which agrees with what we defined for the linear model.  For nonlinear functions $F$, the IG attributions generalize the linear case by considering the average value of the gradient between the baseline and the new prediction.

Since the fundamental theorem of line integrals is not exclusive to differentiable models, but also applies to any continuous, piecewise-differentiable model, one can apply IG attributions to a much wider family of models and maintain the same geometric motivation. This includes ReLU networks \cite[]{glorot2011deep, Krizhevsky2012ImageNet},  splines \cite[]{wahba1990spline}, and GPs with deep kernel learning \cite[]{wilson2016deep}.

\subsubsection{Axiomatic Interpretation}
In the geometric approach to defining the IG attributions, we arbitrarily chose $\bgamma(t)$ to be the straight-line path $\xb + t(\x-\xb)$. In principle, any path $\bgamma(t)$ that connects $\x$ and $\xb$ could define a set of attributions, but a different choice of path can yield different values for the attributions. To this end, \cite{sundararajan2017axiomatic} and \cite{lundstrom2022rigorous} investigated the uniqueness of IG attributions. Building upon earlier work by \cite{friedman2004paths}, they proved that IG attributions, with $\bgamma(t)=\xb + t(\x-\xb)$, are uniquely determined by the seven axioms listed below. 

We note that several of these axioms are related to those that define Shapley values \cite[]{mitchell2022sampling}. However, Shapley values were originally defined for discrete covariates, and additional axioms are required to uniquely extend Shapley values to the continuous setting. If a different set of axioms are selected, then the resulting attribution operator may be different and represent a different attribution theory. \cite{sundararajan2020many} further discuss how different axioms induce different theories of feature attribution.
\begin{enumerate}
    \item \textbf{Sensitivity(a).} If $\x$ and $\xb$ differ in one coordinate ($x_i\neq \tilde{x}_i$ and $x_j = \tilde{x}_j$ for $j\neq i$), and also $F(\x)\neq F(\xb)$, then  $attr_i(\x|F) \neq 0$. 
    \item \textbf{Sensitivity(b)}, also known as Dummy \cite[]{friedman2004paths}.  If $F$ is a constant function with respect to $x_i$, then $attr_i(\x|F)=0$. In particular, this means that if $\partial F/\partial x_i = 0$ everywhere, then the attribution is zero. 
    \item \textbf{Implementation invariance.}
    The attributions should depend only on the mathematical properties of the predictive model, and not on a particular architecture or implementation.
    If $F,G$ are two models such that for all $\x$, $F(\x)=G(\x)$, then
    $ attr_i(\x|F) = attr_i(\x|G).$
    \item \textbf{Linearity}. If $F$ can be expressed as a weighted sum,  $F(\x) = \sum_{m=1}^M w_m F_m(\x)$, then the attributions of $F$ are also a weighted sum:
    $$
    attr_i(\x|F) = 
    attr_i\left(\x\bigg| \sum_{m=1}^M w_m F_m \right) = \sum_{m=1}^M w_m attr_i(\x|F_m).
    $$
    \item \textbf{Completeness}. The attributions sum to the difference between the prediction and baseline,
    $$
    \sum_{i=1}^D  attr_i(\x|F) = F(\x) - F(\xb).
    $$
    \item \textbf{Symmetry-preserving}. For a fixed pair $(i,j)$, define $\sigma_{ij}(\x)$ to be $\x$ but with the entries $x_i$ and $x_j$ swapped. We say that $x_i$ and $x_j$ are \emph{symmetric w.r.t. a function $F$} if $F(\sigma_{ij}(\x))=F(\x)$.
    An attribution method is \emph{symmetry-preserving} if
    $$
    attr_i(\x|F) = attr_j(\x|F),
    $$
    whenever $x_i=x_j$, $\tilde{x}_i=\tilde{x}_j$, and $x_i$ and $x_j$ are symmetric w.r.t. $F$.
    \item \textbf{Non-decreasing positivity}. A path $\bgamma(t)$ in $\mathbb{R}^D$ is called \emph{monotone} for each $i=1,...,D$, we have that either $d \gamma_i(t)/dt > 0$ or $d \gamma_i(t)/dt < 0$ for all $t$. \emph{Non-decreasing positivity} asserts that if $F$ is non-decreasing along \emph{every} monontone path from $\xb$ to $\x$, then $attr_i(\x|F) \geq 0$ for all $i=1,...,D$.
\end{enumerate}
Given these seven axioms, uniqueness can be claimed when  $ F \in C^1(\mathbb{R}^D)$,
that is, when $F$ is continuously differentiable \cite[]{lundstrom2022rigorous}. 
Whether or not a machine learning model satisfies requirements like $F\in C^1$ depends on the model architecture.
The differentiability requirement is often automatically satisfied when using GPR, which we discuss in Section \ref{sec:gpcalculus}.

\subsection{Linear Operators}
\label{sec:operators}
The basic motivation for feature attribution so far has focused on the case in which we have a fixed input $\x$ from which we want to make a prediction. However, as we vary the point $\x$ continuously throughout the input space, the attributions $attr_i(\x)$ also change continuously. For this reason, it may be fruitful to not only consider $attr_i(\x|F)$ as a fixed quantity, but rather as a function of the input $\x$.

We define the \textbf{attribution operator} $attr_i$ to be the linear operator that takes a base function $F$ and produces the attribution function $attr_i(\x|F)$. In this way, completeness can be viewed as a decomposition of the predictive function $F$ into components that correspond to the attributions to each feature. The attribution operators also preserve smoothness, since $attr_i(\x|F) \in C^\infty(\mathbb{R}^D)$ whenever $F \in C^\infty(\mathbb{R}^D)$ holds.\footnote{To see this, notice that IG attributions in \eqref{eq:IG} can be defined in 4 steps: taking a derivative, precomposing with a path $\bgamma(t)=\xb + t(\x-\xb)$, integrating against $t$, and then scaling by $x_i-\tilde{x}_i$. All of these operations preserve the $C^\infty$ property.} Changing the baseline $\xb$ will yield a different set of attribution operators, but they share the same properties regardless of the particular choice of $\xb$. For this reason, the baseline $\xb$ may be considered as a hyperparameter.

In Section \ref{sec:thm}, linearity of the attribution operator is the main property needed to derive results for GPR models. 
While we only analyze IG attributions in detail, the operator theoretic perspective clarifies how our results can be extended to other attribution theories.

\section{Gaussian Process Regression}
\label{sec:GPR}
In this section, we briefly review the basics of GPR and establish our notation. The basic GPR model assumes that the data to be predicted $y_n, n=1,...,N$, can be represented by an unknown function $F$ of known input vectors $\x_n$ for each point, with a constant additive noise:
\begin{equation}
\label{eq:nam}
    y_n = F(\x_n) + \varepsilon_n, \qquad \varepsilon_n \stackrel{iid}{\sim} \mathcal{N}(0,\sigma^2),
    \qquad n=1,...,N.
\end{equation}

The goal of GPR is to infer the function $F$ in a non-parametric and Bayesian manner, given the set of training data $\{(\x_n,y_n); n=1,...,N\}$.
To learn the function, one must first put a prior distribution over $F$. Generally, the prior is used to encode qualitative properties of the function such as continuity, differentiability or periodicity \cite[]{rasmussen2006gaussian}. In GPR, the prior for $F$ is taken to be a \emph{Gaussian process} with mean function $m$ and covariance function $k$, denoted as
$$
F \sim \gp(m,k).
$$
The function $F({\x})$ is a GP if for any finite set of inputs $\x_n$, the function's values $F(\x_n)$ have a jointly Gaussian distribution specified by $m$ and $k$, that is,
\begin{equation}
\label{eq:gpprior}
    \begin{bmatrix}
        F(\x_1)
        \\ \vdots
        \\ F(\x_N)
    \end{bmatrix}
    \sim
    \mathcal{N} \left( 
    \begin{bmatrix}
        m(\x_1)
        \\ \vdots
        \\ m(\x_N)
    \end{bmatrix}
    ,
    \begin{bmatrix}
        k(\x_1,\x_1) & \cdots & k(\x_1,\x_N)
        \\ \vdots & & \vdots
        \\ k(\x_N,\x_1)& \cdots & k(\x_N,\x_N) 
    \end{bmatrix}\right).
\end{equation}

It is conventional to introduce some notation here. We write 
$$
\mathbf{m} \stackrel{\Delta}{=}
    \begin{bmatrix}
        m(\x_1)
        \\ \vdots
        \\ m(\x_N)
    \end{bmatrix},
\qquad 
\mathbf{K} \stackrel{\Delta}{=} \begin{bmatrix}
        k(\x_1,\x_1) & \cdots & k(\x_1,\x_N)
        \\ \vdots & & \vdots
        \\ k(\x_N,\x_1)& \cdots & k(\x_N,\x_N) 
    \end{bmatrix}, 
    \qquad
\mathbf{k}(\x) \stackrel{\Delta}{=}
    \begin{bmatrix}
        k(\x_1,\x)
        \\ \vdots
        \\ k(\x_N,\x)
    \end{bmatrix},
$$
to make the equations more concise. 

To learn functions using GPR, we consider what happens when we augment \eqref{eq:gpprior} with a new data point $\x_*$, separate from the data used during training, and we attempt to predict $F(\x_*)$, the value of the function at this location. From \eqref{eq:nam} and \eqref{eq:gpprior}, we may write that
\begin{equation}
\label{eq:gppriorpost}
\begin{bmatrix}
    \mathbf{y} \\ F(\x_*)
\end{bmatrix}
\sim
\mathcal{N} \left(
\begin{bmatrix}
    \mathbf{m} \\ m(\x_*)
\end{bmatrix}
,
\begin{bmatrix}
    \mathbf{K} +  \sigma^2 \mathbf{I}_{N} & \mathbf{k}(\x_*) 
    \\ \mathbf{k}(\x_*)^\top & k(\x_*,\x_*) 
\end{bmatrix}
\right),
\end{equation}
where $\y = [y_1 \cdots y_N]^\top$ and $\mathbf{I}_{N}$ is the $N\times N$ identity matrix.  Since the values of $\y$ were already observed, we can condition on this data to obtain a posterior probability distribution for $F(\x_*)$, which is again Gaussian, %
\begin{align}
\label{eq:gpposterior}
    F(\x_*) | \y &\sim \mathcal{N}\left(
    \mu_*, \sigma^2_* \right),
    \\ \mu_* &=  m(\x_*) +  \mathbf{k}(\x_*)^\top (\mathbf{K} + \sigma^2 \mathbf{I}_N)^{-1} (\y - \mathbf{m}), 
\label{eq:gppostmean}
    \\ \sigma^2_* &= k(\x_*, \x_*) - \mathbf{k}(\x_*)^\top (\mathbf{K} +  \sigma^2 \mathbf{I})^{-1} \mathbf{k}(\x_*). 
\label{eq:gppostvar}
\end{align}
We omit the derivation of these expressions, as they can be found in several standard references \cite[p. 84]{murphy2022probabilistic}.

Viewing \eqref{eq:gpposterior} as a function of $\x_*$, we gain the ability to reason about the function $F$ in a Bayesian manner. 
An important fact about these expressions is that the posterior expected value of the function, $\mean(F(\x_*)|\y)$, can be expressed as a weighted sum of kernel functions,
\begin{equation}
\mean(F(\x_*)|\y) = \mathbf{k}(\x_*)^\top (\mathbf{K} + \sigma^2 \mathbf{I}_N)^{-1}(\y-\mathbf{m}) = \sum_{n=1}^N k(\x_*,\x_n) \alpha_n,
\end{equation}
where 
\begin{equation}
\label{eq:representer}
\begin{bmatrix}
    \alpha_1 \\ \vdots \\ \alpha_N 
\end{bmatrix}
\stackrel{\Delta}{=}
(\mathbf{K} + \sigma^2 \mathbf{I}_N)^{-1}(\y-\mathbf{m}).
\end{equation}
The weighted sum in \eqref{eq:representer} is in particular useful to analyze the GPR model, allowing us to perform   calculations that would otherwise be intractable.

\subsection{Covariance Functions}
\label{sec:kernelfcn}
In many applications, the default kernel used in GPR is the squared exponential (SE) kernel \cite[]{rasmussen2006gaussian}, given by 
\begin{equation}
    k_\text{SE}(\x,\x') = \sigma^2_0 \exp \left( -\sum_{i=1}^D \frac{(x_i-x_i')^2}{2\ell^2} \right),
\end{equation}
where $\ell$ and $\sigma^2_0$ are hyperparameters. A minor modification that improves the fitting power of the SE kernel is to let each input dimension $x_i$ have its own length-scale parameter $\ell_i$, in which case we have the automatic relevance detection squared exponential (ARD-SE) kernel \cite[]{neal1996bayesian}:
\begin{equation}
\label{eq:ardse}
    k_\text{ARD-SE}(\x,\x') = \sigma^2_0 \exp \left( -\sum_{i=1}^D \frac{(x_i-x_i')^2}{2\ell_i^2} \right).
\end{equation}

\subsection{Derivatives and Integrals of GPs}
\label{sec:gpcalculus}
For computing the IG attributions, we will need to integrate and differentiate GPs. 
We already remarked that functions sampled from a GP will be differentiable when the GP mean and kernel functions are also sufficiently differentiable \cite[]{adler2007random}. In this way, it also makes sense to ask what is the probability distribution of $\partial F/\partial x_i$ when $F\sim \gp$. The answer is well known; $\partial F/\partial x_i$ is also distributed according to a GP:

\begin{lemma}[Derivatives of GPs]  
\label{lemma:diff}
    If $F \sim \gp(m,k)$, where $m\in C^1(\mathbb{R}^D)$ and $k\in C^2 (\mathbb{R}^D \times \mathbb{R}^D)$, then 
    $$
    \frac{\partial F}{\partial x_i} \sim \gp \left( \frac{\partial m}{\partial x_i} , \frac{\partial^2 k}{\partial x_i \partial x_i'}\right).
    $$
\end{lemma}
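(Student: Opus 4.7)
The plan is to exhibit $\partial F/\partial x_i$ as a mean-square limit of finite difference quotients of $F$, and then to exploit the fact that the $L^2$-closure of a family of jointly Gaussian random variables is again jointly Gaussian. Concretely, for any $\x \in \mathbb{R}^D$ and nonzero scalar $h$, set
$$
D_h F(\x) \stackrel{\Delta}{=} \frac{F(\x + h e_i) - F(\x)}{h},
$$
where $e_i$ is the $i$-th standard basis vector. For any finite collection $\x_1,\dots,\x_M$, the vector $(D_h F(\x_1),\dots,D_h F(\x_M))$ is a linear transformation of $(F(\x_1),F(\x_1+h e_i),\dots,F(\x_M),F(\x_M+h e_i))$, hence jointly Gaussian by definition of a GP.

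The crucial step is to show that $D_h F(\x)$ is Cauchy in $L^2$ as $h \to 0$. Writing out $\mean[(D_h F(\x) - D_{h'} F(\x))^2]$ gives a linear combination of values of $m$ and $k$ at the four points $\x, \x+h e_i, \x+h' e_i$; the $C^1$ assumption on $m$ and a second-order Taylor expansion of $k$ (which is justified by the $C^2$ assumption) together show that this quantity tends to zero. I would denote the $L^2$ limit by $G(\x)$ and observe that, because the Gaussian property is preserved under $L^2$ limits, the vector $(G(\x_1),\dots,G(\x_M))$ is jointly Gaussian for every finite collection, so $G$ is itself a Gaussian process.

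Finally, the mean and covariance of $G$ follow from interchanging limit and expectation (valid under $L^2$ convergence). The mean is
$$
\mean[G(\x)] = \lim_{h\to 0} \frac{m(\x + h e_i) - m(\x)}{h} = \frac{\partial m}{\partial x_i}(\x),
$$
using $m \in C^1$. For the covariance, one has
$$
\cov(G(\x),G(\x')) = \lim_{h,h'\to 0} \frac{k(\x+h e_i,\x'+h' e_i) - k(\x+h e_i,\x') - k(\x,\x'+h' e_i) + k(\x,\x')}{h h'},
$$
which, by the $C^2$ regularity of $k$, equals the mixed partial $\partial^2 k/(\partial x_i\,\partial x_i')(\x,\x')$. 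Identifying $G$ with $\partial F/\partial x_i$ (which is standard for mean-square differentiable Gaussian processes) then yields the claimed distribution.

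The main obstacle is the $L^2$ existence step: verifying that the $C^2$ hypothesis on $k$ is both necessary and sufficient for the difference quotients to converge, and ensuring that the resulting mean-square derivative is the natural object on which the integrated-gradient construction can later be applied. Once that regularity argument is in place, the Gaussian limit and the computation of the mean and covariance functions are routine.
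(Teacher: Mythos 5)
Your proposal is correct and follows essentially the same route as the paper: both form the finite-difference quotients $D_hF(\x)$, observe that they are jointly Gaussian as linear images of the GP's finite-dimensional marginals, pass to the $L^2$ limit using the fact that $L^2$ limits of Gaussians remain Gaussian, and read off the mean and covariance as the limits $\partial m/\partial x_i$ and $\partial^2 k/\partial x_i\partial x_i'$. If anything, you are slightly more explicit than the paper about verifying the Cauchy-in-$L^2$ step and about identifying the mean-square derivative with $\partial F/\partial x_i$, both of which the paper's argument leaves implicit.
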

This result is well-established in the literature \cite[]{solak2002derivative,le2016brownian}, but for clarity we derive it in Appendix \ref{app:proofsgp}.

In a similar spirit, the integral of a GP is also Gaussian, but we should be careful to clarify what type of integration we are considering. In one-dimension, the indefinite integral of a GP requires taking a limit that may not be well defined, depending on the mean and covariance functions of the GP. However, we can define the definite integral fairly easily, which turns out to be a non-stationary GP. 
\begin{lemma}[Integral of a GP]
\label{lemma:integral}
    Suppose $f \sim \gp(m,k)$, where $m\in C^0(\mathbb{R})$ and $k\in C^0(\mathbb{R}\times \mathbb{R})$, and let $a\in\mathbb{R}$ be fixed. The definite integral $F(x)=\int_a^x f(t) dt$ is a GP in $x$, 
    $$
    F(x)= \int_a^x f(t) dt 
    \sim \gp(M,K),
    $$
    where
    \begin{align*}
        M(x) &= \int_a^x m(t) dt,
        \\ K(x,x') &= \int_a^x \int_a^{x'} k(s,t)ds dt. 
    \end{align*}
\end{lemma}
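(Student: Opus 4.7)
The plan is to realise $F(x)=\int_a^x f(t)\,dt$ as a mean-square limit of Riemann sums and then invoke closure of the Gaussian family under such limits. Fix a collection of evaluation points $x_1,\dots,x_p\in\mathbb{R}$. For each of these $x_j$, choose a sequence of partitions $a=t_0^{(n,j)}<t_1^{(n,j)}<\cdots<t_n^{(n,j)}=x_j$ whose mesh shrinks to zero, and form the Riemann sums $S_n^{(j)}=\sum_{r=1}^n f\bigl(t_r^{(n,j)}\bigr)\,\bigl(t_r^{(n,j)}-t_{r-1}^{(n,j)}\bigr)$. Since each $S_n^{(j)}$ is a finite linear combination of the Gaussian variables $\{f(t_r^{(n,j)})\}$, the joint vector $(S_n^{(1)},\dots,S_n^{(p)})$ is multivariate Gaussian, with mean vector and covariance matrix determined by $m$ and $k$ evaluated at the partition nodes.

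Next I would verify that for each $j$ the sequence $S_n^{(j)}$ converges in $L^2(\Omega)$ to a well-defined random variable, which we call $F(x_j)$. The standard argument uses the Cauchy criterion: $\mean\bigl[(S_n^{(j)}-S_{n'}^{(j)})^2\bigr]$ can be rewritten as a double Riemann sum for $\iint k(s,t)\,ds\,dt$ minus two mixed double sums, and by uniform continuity of $k$ on compact rectangles (granted by $k\in C^0$) these all converge to the same limit, so the difference tends to zero. The continuity of $m$ handles the mean terms analogously. Once mean-square convergence is established, the limiting mean and covariance are obtained by passing to the limit inside the Riemann sums, giving $\mean[F(x)]=\int_a^x m(t)\,dt$ and $\cov(F(x),F(x'))=\int_a^x\int_a^{x'} k(s,t)\,ds\,dt$.

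Finally, I would use the fact that the Gaussian family is closed under $L^2$ limits: convergence in mean square implies convergence in distribution, and a weak limit of Gaussian vectors with convergent means and covariances is Gaussian with those limiting parameters. Applying this to the joint vector $(S_n^{(1)},\dots,S_n^{(p)})$ shows that $(F(x_1),\dots,F(x_p))$ is multivariate Gaussian with the claimed mean vector $(M(x_1),\dots,M(x_p))$ and covariance matrix $[K(x_j,x_{j'})]_{j,j'}$. Since the $x_j$ were arbitrary, this establishes $F\sim\gp(M,K)$.

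The main obstacle is the $L^2$-Cauchy step: one must handle sequences of Riemann partitions that need not be nested, and verify uniform convergence of double Riemann sums for $\iint k$. The cleanest route is to restrict attention to a compact interval containing $a$ and all evaluation points, where $m$ and $k$ are uniformly continuous, so that partition refinements yield the desired Cauchy bound. Sample-path integrability is not needed because the construction is intrinsically mean-square; if one wishes to also interpret the integral pathwise, an additional continuity-of-paths argument (e.g., via Kolmogorov's criterion applied to $k$) would be required, but that is not necessary for the statement as given.
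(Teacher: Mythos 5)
Your proposal follows essentially the same route as the paper's proof: approximate $F(x)$ by Riemann sums, note that each sum is a finite linear combination of jointly Gaussian variables, pass to the $L^2$ limit using the closure of the Gaussian family under mean-square convergence, and conclude for all finite-dimensional distributions. Your explicit treatment of the $L^2$-Cauchy step via uniform continuity of $k$ on compact rectangles is a welcome refinement of a point the paper only asserts, but the overall argument is the same.
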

The integral of a GP could be established more generally, but the one-dimensional case is all we require in the current work. Non-stationarity arises from the fact that when $x=a$, the integral is exactly 0, but otherwise the integral can accumulate variance as the distance between $x$ and $a$ increases. As before, we prove this lemma in Appendix \ref{app:proofsgp}.

\subsection{Spectral Approximations to GPR}
Owing to the poor scalability of the GPR model to large numbers of training samples, several approximations are often employed in practice to apply GPR models to large data sets \cite[]{hensman2013gaussian}. In this section, we consider the class of \emph{spectral approximations} to the GPR model. 

According to Bochner's theorem, every stationary covariance function, expressed as $r(\x-\x') = k(\x,\x')$, can be represented as the Fourier transform of a finite positive measure \cite[]{stein1999interpolation}. When this positive measure has a spectral density $S$, the Wiener-Khintchin theorem \cite[Chapter 4]{rasmussen2006gaussian} states that $r$ and $S$ are Fourier duals. Since the measure represented by $S$ is finite and positive, it can be normalized to yield a probability density function $p(\mathbf{v})$.
The random feature GP (RFGP) model proposed by \cite{lazaro2010sparse} samples from this probability distribution to generate \emph{random features} used to approximate the full GP kernel. Under this approach, we approximate
$$
k(\x,\x') \approx \bphi(\x)^\top \bphi(\x'),
$$
using $M$ samples from the power spectral density of the covariance function, where
$$
\bphi(\x) = \begin{bmatrix}
    \sin(\x^\top \mathbf{v}_1) \\
    \cos(\x^\top \mathbf{v}_1) \\
    \vdots \\
    \sin(\x^\top \mathbf{v}_M) \\
    \cos(\x^\top \mathbf{v}_M)
\end{bmatrix},
$$
and
$$
\mathbf{v}_m \stackrel{\mathrm{iid}}{\sim} p(\mathbf{v}), \qquad m=1,...,M.
$$

The Wiener-Khintchin theorem is particularly useful because it allows us to explicitly compute the density $p$ in a number of cases. For example, the Fourier dual of a Gaussian function $\exp( -\x^\top \mathbf{K}^{-1} \x )$, where $\mathbf{K}$ is a positive definite matrix, is again a Gaussian. Thus, we can produce random features from SE and ARD-SE kernels by sampling from Gaussian distributions. 
Having this sparse approximation of the kernel function, we can derive the predictive model for RFGPs. From \cite{lazaro2010sparse}, the predictive model is given by 
\begin{equation}
    \label{eq:rfgpmean}
\mean(F(\x_*)) = \bphi(\x_*)^\top \mathbf{A}^{-1} \bf{\Phi}\y, 
\end{equation}
and
\begin{equation}
    \label{eq:rfgpvar}
\cov(F(\x_*),F(\x_*')) =  \sigma^2_n \bphi(\x_*)^\top \mathbf{A}^{-1} \bphi(\x_*'),
\end{equation}
where we have defined the matrices
\begin{align*}
    &\A = \bPhi \bPhi^\top + \frac{M\sigma^2_n}{\sigma^2_0} \mathbf{I}_{2M},
    & %
    \quad &\bPhi = \left[\bphi(\x_1), \bphi(\x_2), \cdots, \bphi(\x_N) \right] \in \mathbb{R}^{2M \times N},
\end{align*}
for convenience. 
According to the Wiener-Khintchin theorem, frequency vectors for an  ARD-SE kernel, \eqref{eq:ardse}, can be sampled as
$$
\mathbf{v}_m \sim \mathcal{N}\left( \mathbf{0}, \text{diag}(\boldsymbol{\ell})^{-2}  \right), \qquad m=1,...,M,
$$
where $\boldsymbol{\ell}=[\ell_1 \cdots \ell_D]^\top$.
This approach leaves us with $\dim(\x)+2$ hyperparameters in this model,
$$
\boldsymbol{\theta} = \begin{bmatrix}
    \sigma^2_n \\ \sigma^2_0 \\ \boldsymbol{\ell}
\end{bmatrix}.
$$

We close this section by noting that other variants of the RFGP exist, taking different approaches to sample random features. In general, RFGPs are not usually optimal, because the particular sample of frequency vectors $\mathbf{v}_m$ strongly determines the quality of the approximation. Additionally, other approaches may yield feature-based GP models with the same form as \eqref{eq:rfgpmean}. In particular, Hilbert space approximations to GPR have a similar form to RFGPs, but these models see improvements in performance because their features are optimally selected as opposed to being randomly sampled \cite[]{riutort2023practical}.

\section{Attributions in GPR Models}
\label{sec:thm}
In this section, we introduce our main theorem: GP models produce GP attributions. We then apply this theorem to several common GPR models to derive explicit expressions for their attributions.

\setcounter{theorem}{0}
\begin{theorem}[Integrated gradients preserve Gaussianity]
    \label{thm:gpattr}
    If $F\sim \gp(m,k)$, where we assume $m\in C^1(\mathbb{R}^D)$ and $k\in C^2(\mathbb{R}^D \times \mathbb{R}^D)$, then the IG attributions follow a  GP:
    \begin{align}
    attr_i(\x|F) \sim \gp( \mu_i, \kappa_i ), \qquad i=1,...,D,
    \end{align}
    where
    \begin{align}
    \label{eq:iggpmean}
    \mu_i(\x) &=  
    (x_i-\tilde{x}_i) \int_0^1 \frac{\partial m (\xb + t(\x-\xb))}{\partial x_i} dt,
    \\ \kappa_i(\x,\x') &= (x_i-\tilde{x}_i)(x_i'-\tilde{x}_i) \int_0^1 \int_0^1 \frac{\partial^2 k(\xb + s(\x-\xb), \xb + t(\x'-\xb))}{\partial x_i \partial x_i'} dt ds.   
    \label{eq:iggpkernel}
    \end{align}
    Furthermore, the attributions satisfy a GP version of the completeness axiom:
    \begin{equation}
        \sum_{i=1}^D attr_i(\x|F) = F(\x) - F(\xb) \sim \gp(\tilde{m},\tilde{k}), 
    \end{equation}
    where
    \begin{align}
        \tilde{m}(\x) &= m(\x)-m(\xb), \\ 
        \tilde{k}(\x,\x') &= k(\x,\x') + k(\xb,\xb) - k(\x,\xb) - k(\x',\xb).
    \end{align}
\end{theorem}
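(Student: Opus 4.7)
The plan is to exploit the linearity of the IG attribution operator together with Lemmas \ref{lemma:diff} and \ref{lemma:integral}, reading
$$attr_i(\x|F) = (x_i - \tilde{x}_i)\int_0^1 \frac{\partial F}{\partial x_i}\bigl(\xb + t(\x-\xb)\bigr)\, dt$$
as a composition of four operations, each of which preserves Gaussianity: partial differentiation in $x_i$, precomposition with the straight-line path $\bgamma_\x(t) = \xb + t(\x-\xb)$, integration over $t\in[0,1]$, and scalar multiplication by the deterministic constant $x_i - \tilde{x}_i$. The smoothness hypotheses $m\in C^1$ and $k\in C^2$ are exactly what Lemma \ref{lemma:diff} needs to produce $\partial F/\partial x_i \sim \gp(\partial m/\partial x_i,\; \partial^2 k/\partial x_i \partial x_i')$.

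With the derivative in hand, fix $\x$ and precompose with $\bgamma_\x$ to obtain a one-dimensional process in $t$ whose mean and covariance are the derivative mean and kernel evaluated along the path; joint Gaussianity of its finite-dimensional marginals is preserved because precomposition with a deterministic curve is a linear operation. Lemma \ref{lemma:integral} then applies to this 1D process, yielding a Gaussian random variable whose mean, after multiplication by $x_i - \tilde{x}_i$, reproduces \eqref{eq:iggpmean}. To upgrade ``Gaussian for each $\x$'' into ``GP jointly in $\x$,'' I would argue that for any finite collection $\x^{(1)},\dots,\x^{(L)}$ the vector $\bigl(attr_i(\x^{(\ell)}|F)\bigr)_{\ell=1}^L$ is an $L^2$-limit of Riemann sums of $\partial F/\partial x_i$ evaluated along the $L$ paths, hence Gaussian as a limit of Gaussian vectors. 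The cross-covariance $\kappa_i(\x,\x')$ is then obtained via a Fubini exchange applied to $\cov\!\bigl(\int_0^1 g_\x(s)\, ds,\; \int_0^1 g_{\x'}(t)\, dt\bigr)$, which, combined with the factor $(x_i - \tilde{x}_i)(x_i' - \tilde{x}_i)$, reproduces \eqref{eq:iggpkernel}.

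The completeness statement is shorter: the fundamental theorem of line integrals used in the derivation \eqref{eq:gradsum}--\eqref{eq:ohlook} already gives $\sum_i attr_i(\x|F) = F(\x) - F(\xb)$ pointwise in $\x$. Since $F\sim\gp(m,k)$, the process $\x \mapsto F(\x) - F(\xb)$ has jointly Gaussian finite-dimensional distributions with mean $m(\x) - m(\xb) = \tilde{m}(\x)$ and, by bilinearity of covariance, with kernel $k(\x,\x') - k(\x,\xb) - k(\x',\xb) + k(\xb,\xb) = \tilde{k}(\x,\x')$, completing the claim.

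The main technical obstacle is the step that turns the pointwise integral of a Gaussian process into a bona fide GP jointly in the target $\x$: one must justify both the $L^2$-convergence of the Riemann sum approximations (which follows from continuity of $\partial^2 k/\partial x_i \partial x_i'$ on the compact segment $[\xb,\x]$) and the interchange of $\cov$ with the double integral in the covariance computation. Once those approximations are in place, the remainder of the proof is bookkeeping with the mean and kernel formulas supplied by Lemmas \ref{lemma:diff} and \ref{lemma:integral}.
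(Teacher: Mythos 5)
Your proposal follows essentially the same route as the paper: the same four-step decomposition of the attribution operator (differentiation, precomposition with the straight-line path, integration over $[0,1]$, scaling by $x_i-\tilde{x}_i$), each step justified by the Gaussianity-preservation lemmas, with the joint-in-$\x$ GP structure secured by $L^2$ limits of Riemann sums over finite collections of points and completeness obtained pointwise from the fundamental theorem of line integrals (the paper phrases this as holding for almost-every differentiable sample $f$, which is the same argument). The technical obstacle you flag at the end is exactly the one the paper discharges inside its proofs of Lemmas \ref{lemma:diff} and \ref{lemma:integral}, so your outline is correct and complete.
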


The basic motivation for this theorem is that since $attr_i$ is a linear operator, it preserves Gaussianity \cite[]{solak2002derivative}. The proof is given in Appendix \ref{app:proofthm}. Here we decompose the attribution operator into a sequence of operations through which we can compute the form of the GP.
Other attribution theories, formulated in terms of linear operators as mentioned in \ref{sec:operators}, would admit proofs in a similar manner.  It should be noted that the attribution functions are always \emph{heteroscedastic}, meaning that their variance is location-dependent. This can be seen by observing that $\kappa_i(\x,\x)$ generally grows as $x_i-\tilde{x}_i$ increases, but also that $\var(attr_i(\x)) = \kappa_i(\x,\x) =0$ whenever $x_i = \tilde{x}_i$.

Knowing the expressions for the mean and covariance of the GPR posterior predictive distribution, we may derive expressions for the attributions of a GPR model. For a bit of notation, recall from $\eqref{eq:vectorattr}$ that the attribution vector $\mathbf{attr}_i(\x|\mathbf{k}(\x))$ is a vector-valued function whose entries are the attribution operator applied to each entry of $\mathbf{k}(\x)$, that is,
    $$
    \mathbf{attr}_i(\x|\mathbf{k}) = \begin{bmatrix}
        attr_i(\x|k(\x,\x_1)) \\ \vdots \\ 
        attr_i(\x|k(\x,\x_N))
    \end{bmatrix}.
    $$
Using this notation, we present the attributions of the GPR predictive posterior model in \eqref{eq:gpposterior} via the following corollary.

\setcounter{theorem}{0}
\begin{coro}[Attributions of the GPR Posterior-Predictive]
\label{corollary:gpposterattr}
    If $F$ is distributed according to the GP posterior-predictive distribution \eqref{eq:gpposterior}, \eqref{eq:gppostmean}, and \eqref{eq:gppostvar}, then the IG attributions $attr_i(\x)$ are GPs such that
    \begin{equation}
    \mean(attr_i(\x)) = attr_i(\x|m) +  \mathbf{attr}_i(\x | \mathbf{k})^\top (\mathbf{K} + \sigma^2 \mathbf{I}_N)^{-1} (\y-\mathbf{m}),
    \end{equation}
    and
    \begin{align}
    \label{eq:attrcov}
    \cov(attr_i(\x),attr_i(\x'))  = (x_i - \tilde{x}_i) & (x_i'-\tilde{x}_i)\int_0^1 \int_0^1 \frac{\partial^2 k(\xb + s(\x-\xb), \xb + t(\x'-\xb))}{\partial x_i \partial x_i'} ds dt 
    \\ &- \mathbf{attr}_i(\x | \mathbf{k})^\top (\mathbf{K} +  \sigma^2 \mathbf{I})^{-1} \mathbf{attr}_i(\x'|\mathbf{k}), 
    \notag
    \end{align}
\end{coro}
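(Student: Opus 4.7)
The plan is to instantiate Theorem~\ref{thm:gpattr} on the posterior-predictive GP \eqref{eq:gpposterior}--\eqref{eq:gppostvar} and then push the attribution operator through the linear-algebraic structure of the posterior mean and kernel, using linearity of $attr_i$ from Section~\ref{sec:operators}.

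First, I would rewrite the posterior mean as $\mu_*(\x) = m(\x) + \mathbf{k}(\x)^\top \boldsymbol{\alpha}$, with $\boldsymbol{\alpha} = (\mathbf{K}+\sigma^2\mathbf{I}_N)^{-1}(\y-\mathbf{m})$ a constant (data-dependent) vector, and the posterior kernel as $\tilde k(\x,\x') = k(\x,\x') - \mathbf{k}(\x)^\top \mathbf{M}\,\mathbf{k}(\x')$, with $\mathbf{M} = (\mathbf{K}+\sigma^2\mathbf{I})^{-1}$. Both $\mu_*$ and $\tilde k$ inherit the $C^1$/$C^2$ smoothness of $m$ and $k$, so the hypotheses of Theorem~\ref{thm:gpattr} are satisfied and the attributions of the posterior-predictive GP form a GP.

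For the mean of the attribution GP, applying \eqref{eq:iggpmean} to $\mu_*$ and invoking linearity of $attr_i$ in its second argument gives $\mean(attr_i(\x)) = attr_i(\x|m) + attr_i(\x|\mathbf{k}(\cdot)^\top\boldsymbol{\alpha})$. Since $\boldsymbol{\alpha}$ does not depend on $\x$, it can be pulled outside the derivative and integral that define the attribution operator, producing the claimed inner product $\mathbf{attr}_i(\x|\mathbf{k})^\top\boldsymbol{\alpha}$. For the covariance, applying \eqref{eq:iggpkernel} to $\tilde k$ splits into two pieces: the $k(\x,\x')$ contribution gives the double-integral term in \eqref{eq:attrcov} verbatim, while the bilinear correction $-\mathbf{k}(\x)^\top \mathbf{M}\,\mathbf{k}(\x')$ requires a little more care. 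Here the point is that $\partial/\partial x_i$ only touches the $\mathbf{k}(\x)$ factor, $\partial/\partial x_i'$ only touches $\mathbf{k}(\x')$, and the integrations over $s$ and $t$ (together with the scalars $(x_i-\tilde x_i)$ and $(x_i'-\tilde x_i')$) act independently on each side of the bilinear form in $\mathbf{M}$, so the attribution operator may be moved inside the bilinear form componentwise to yield $-\mathbf{attr}_i(\x|\mathbf{k})^\top \mathbf{M}\,\mathbf{attr}_i(\x'|\mathbf{k})$.

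The main technical hurdle is the justification of this operator exchange in the covariance computation: for a bilinear expression that factors as $\mathbf{k}(\x)^\top \mathbf{M}\,\mathbf{k}(\x')$, one must show that applying $attr_i$ once in each slot agrees with the $\mathbf{M}$-weighted inner product of componentwise attributions. This amounts to interchanging a mixed partial derivative with two iterated integrals against $s$ and $t$, then interchanging those integrals with the finite sum implicit in the matrix product; all of these are legitimate under the $C^2$ hypothesis on $k$ by Leibniz's rule and Fubini's theorem, but it is the only step that is not immediate from linearity alone and deserves to be spelled out.
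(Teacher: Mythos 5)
Your proposal is correct and follows essentially the same route the paper intends: the corollary is obtained by instantiating Theorem~\ref{thm:gpattr} with the posterior mean $m(\x)+\mathbf{k}(\x)^\top\boldsymbol{\alpha}$ and posterior kernel $k(\x,\x')-\mathbf{k}(\x)^\top(\mathbf{K}+\sigma^2\mathbf{I})^{-1}\mathbf{k}(\x')$, then pushing $attr_i$ through the finite sums by linearity, with the double integral factoring across the two slots of the bilinear form exactly as you describe. The only nit is notational: since there is a single baseline $\xb$, the scalar in the second slot is $(x_i'-\tilde{x}_i)$ rather than $(x_i'-\tilde{x}_i')$.
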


The tractability of the expressions in the corollary depends precisely on the choice of mean and covariance functions used in the GP prior. In the following subsections, we compute these results for some noteworthy GPR models.

\subsection{SE and ARD-SE Kernels}
The attributions for the GPR model can be derived in closed form when we assume an ARD-SE kernel \eqref{eq:ardse}, although the derivation can be somewhat tedious. Assuming a zero mean prior over $F$ with ARD-SE kernel, the attributions of the GPR model are given by
\begin{align}
    \mean(attr_i(\x)) &= \sum_{n=1}^N \alpha_n A_{n,i}(\x),
    \\ \var(attr_i(\x)) &= B_i(\x) - \sum_{n=1}^N \sum_{m=1}^N \Lambda_{n,m} A_{n,i}(\x) A_{m,i}(\x), 
\end{align}
where we have defined
\begin{align*}
    A_{n,i}(\x) &=  \frac{e^{\frac{-a-b-c}{2}}  \left[\sqrt{2 \pi } e^{\frac{(2 a+b)^2}{8 a}} \left(\text{erf}\left(\frac{b}{2 \sqrt{2a}}\right)-\text{erf}\left(\frac{2 a+b}{2 \sqrt{2a}}\right)\right) (b d-2 a f)+4 \sqrt{a} d \left(e^{\frac{a+b}{2}}-1\right)\right]}{4 a^{3/2}},
    \\ B_i(\x) &= %
    (x_i-\tilde{x}_i)^2 
    \left( \frac{\sqrt{2 \pi } \text{erf}\left(\frac{\sqrt{a}}{\sqrt{2}}\right) (a w+v)}{a^{3/2}}-\frac{2 e^{-\frac{a}{2}} \left(e^{a/2}-1\right) (a w+2 v)}{a^2} \right),
\end{align*}
where $\text{erf}(\cdot)$ is the error function and 

\begin{align*}
    a &= (\x-\xb)^\top \mathbf{L}^{-2}  (\x-\xb),  \quad & \mathbf{L} = \text{diag}(\ell_1, \cdots, \ell_D),
    \\ b &= 2(\x-\xb)^\top \mathbf{L}^{-2}  (\xb-\x_n), \quad &v = -\sigma^2_0 \frac{ (x_i -\tilde{x}_i)^2}{\ell_i^4},
    \\ c &= (\xb-\x_n)^\top \mathbf{L}^{-2}  (\xb-\x_n),  &w = \frac{\sigma^2_0 }{\ell_i^2},
    \\ d &= -\sigma^2_0 \frac{(x_i-\tilde{x}_i)^2}{\ell_i^2},  &[\Lambda_{n,m}] = (\mathbf{K} + \sigma^2_n \mathbf{I})^{-1},
    \\ f &=  -\sigma^2_0 \frac{(x_i-\tilde{x}_i)(\tilde{x}_i-x_{n,i})}{\ell_i^2},   & [\alpha_n] = (\mathbf{K} + \sigma^2_n \mathbf{I})^{-1} \y,
\end{align*}
are constants which implicitly depend upon $\x$, $\xb$, $\x_n$ and $i$.
The derivation of these expressions is provided in Appendix \ref{app:derivations}.

\subsection{Random Feature GPs}
Based on \eqref{eq:rfgpmean}, it is easy to see that IG attributions for the RFGP model can also be derived. The mean attribution and variance can be given as
\begin{align}
    \mean(attr_i(\x)) &= (x_i-\tilde{x}_i)\boldsymbol{\zeta}(\x) \mathbf{A}^{-1} \boldsymbol{\Phi} \y, \\
    \var(attr_i(\x)) &= (x_i-\tilde{x}_i)^2 \sigma^2_n  \boldsymbol{\zeta}(\x)^\top \mathbf{A}^{-1} \boldsymbol{\zeta}(\x),
    \notag
\end{align}
where for clarity, we introduced the notation
$$
\boldsymbol{\zeta}(\x) \stackrel{\Delta}{=}
    \begin{bmatrix}
    \frac{v_{1,i} }{\mathbf{v}_1^\top (\x-\xb)} (\cos( \mathbf{v}_1^\top \x)-\cos( \mathbf{v}_1^\top \xb))\\
    \frac{v_{1,i} }{\mathbf{v}_1^\top (\x-\xb)}(\sin( \mathbf{v}_1^\top \x)-\sin( \mathbf{v}_1^\top \xb)) \\
    \vdots \\
    \frac{v_{M,i}}{\mathbf{v}_M^\top (\x-\xb)}\left( \cos( \mathbf{v}_M^\top \x)  - \cos( \mathbf{v}_M^\top \xb) \right)\\
    \frac{v_{M,i}}{\mathbf{v}_M^\top (\x-\xb)} \left( \sin( \mathbf{v}_M^\top \x) - \sin( \mathbf{v}_M^\top \xb)\right) 
    \end{bmatrix}
    \in \mathbb{R}^{2M\times 1},
$$
and
$$ 
\mathbf{A}=\boldsymbol{\Phi \Phi}^\top + \frac{M\sigma^2_n}{ \sigma^2_0} \mathbf{I}_{2M}.
$$
The derivation of these expressions is provided in Appendix \ref{app:derivations}.

\subsection{Comments on Other GP Models}
While we only provided closed-form results for the SE and ARD-SE kernels, Theorem \ref{thm:gpattr} applies generally to all GPR models. We now briefly discuss how these results extend to other GP models.

\subsubsection{Additive GPs}
In GP kernel design, it is common to add and multiply kernels to produce new composite kernels \cite[p.19]{gibbs1998bayesian}. 
If $\gp(m,k)$ and $\gp(m',k')$ are two GPs, then the direct sum GP is defined to be $\gp(m+m',k+k')$. Using Corollary \ref{corollary:gpposterattr}, analytical expressions for the direct sum GP can be obtained whenever the component kernels $k$ and $k'$ are tractable.
\cite{duvenaud2011additive} defines the \emph{full additive kernel} by summing over all possible combinations of kernels that one may obtain through including or exclusion of features. Given a choice of base kernel $k_i$ for each input feature $x_i$, the full additive kernel is defined to be
$$
k_\textrm{add}(\x,\x') = \sum_{d=1}^D \sum_{1 \leq i_1 < \cdots < i_d \leq D} \sigma^2_d  \prod_{i=i_1,...,i_d} k_i(x_i,x_i'),
$$
where $\sigma^2_d, d=1,...,D,$ are hyperparameters. Because of the linearity property of the attribution operator, attributions can be written in closed form for the full additive kernel whenever the integrals of the base kernel are tractable.

\subsubsection{Nonstationary GPs}
\emph{Nonstationary GPs} are GPs whose covariance functions are not translation-invariant, and thus may exhibit different amplitudes, length scales and other properties depending on one's location in the input space.  Several approaches to using nonstationary GPs for inference exist: scaling the kernel by parametric functions to introduce heteroscedasticity \cite[pp.16-18]{gibbs1998bayesian}, deep kernel learning \cite[]{wilson2016deep} and deep GPs \cite[]{damianou2013deep}. Theorem \ref{thm:gpattr} applies to directly nonstationary GPs with no modification.

\section{Experiments}
We now validate our theory using several real world and simulated data sets. We provide several representative examples here, and we show additional results in the appendix.

\subsection{Simulated Data: Heteroscedasticity of the Attribution GPs}
In this example, we examine how the variance of an attribution grows as we move farther away from the baseline. To visualize this in a controlled manner, we introduce a synthetic data set to validate our theory. Consider the following generative model, where inputs $X_1$ and $X_2$ combine to produce an output $Y$:
\begin{align}
    \label{eq:syntheticdatamodel}
    X_1 &\sim \mathcal{U}(0,10), \\ \notag
    X_2 &\sim \mathcal{U}(0,10), \\ \notag
    N_Y &\sim \mathcal{N}(0,1), \\ \notag
    Y &= \sin(X_1) \sin(2X_2) + \frac{1}{2} N_Y.
     \notag
\end{align}

We sample 500 input-output pairs to train a GPR model with ARD-SE kernel. The mean predictions of this model are visualized as a function of $X_1$ and $X_2$ in Figure \ref{fig:hetero}. To study attributions, we will consider the line segment parameterized by 
\begin{equation}
\bbeta(t) = 
t \begin{bmatrix}
    1 \\1 
\end{bmatrix},
\qquad 0\leq t \leq 10.
\end{equation}
As we increase $t$, $\bbeta(t)$ moves along the diagonal of the square $[0,10]\times[0,10]$. If we set $\xb=\mathbf{0}$, then the distance between $\bbeta(t)$ and $\xb$ increases with $t$, and we expect that the attributions will increase in variance as we move farther away from the baseline.

In Figure \ref{fig:hetero}, we plot $attr_i(\bbeta(t)|F)$ against $t$, for $i=1,2$. 
We observe that for both attribution functions, the variance of the attribution increases with $t$. As $t\to 0$, both the mean and variance of the attributions tend towards zero. The rate at which the variance of the attribution grows depends on the uncertainty of the GPR predictive model. 

\begin{figure}
    \centering
    \includegraphics[width=0.9\textwidth]{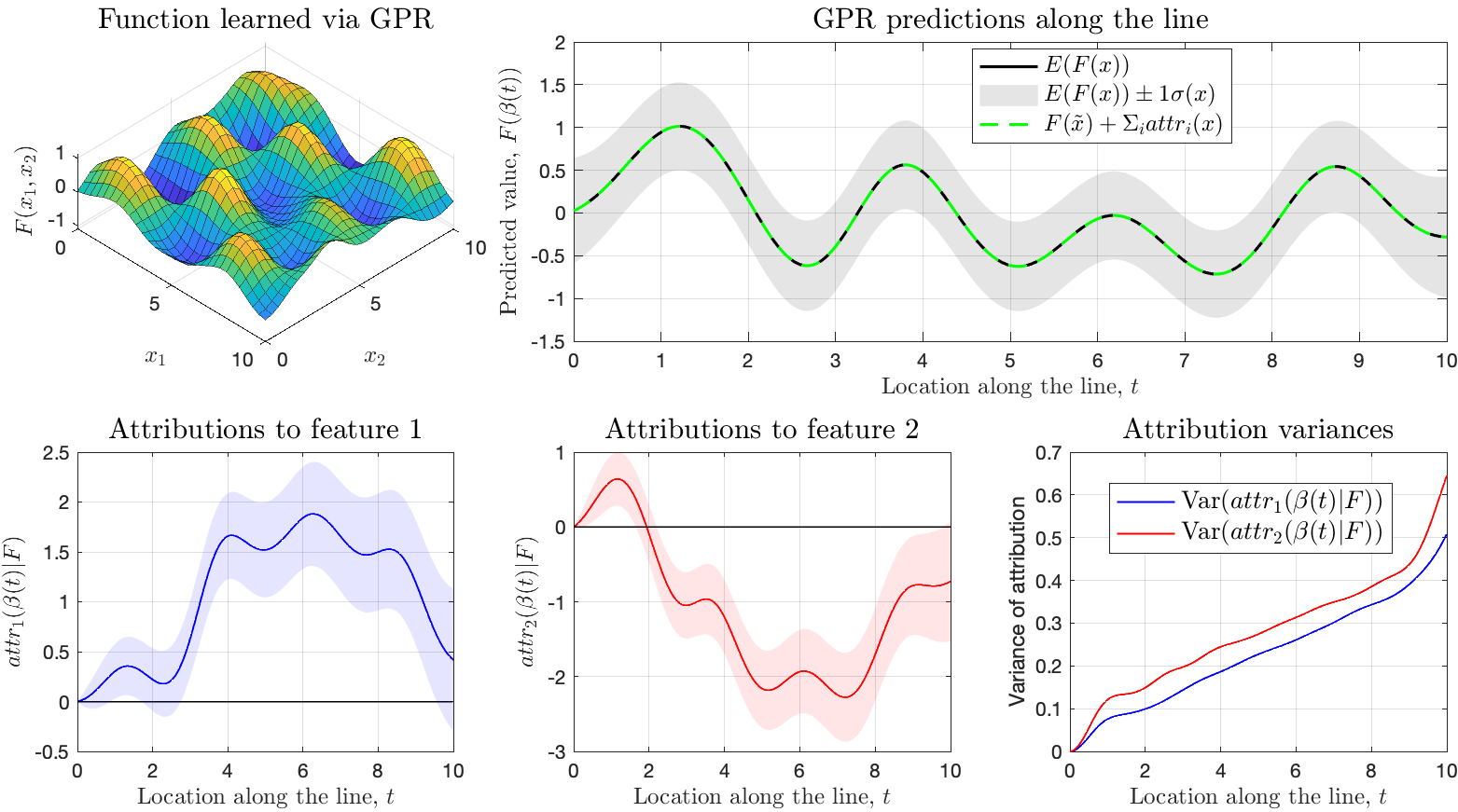}
    \caption{A demonstration that the uncertainty in an attribution grows with the distance from the baseline, using data simulated with the model in \eqref{eq:syntheticdatamodel}. The baseline is $\xb=\mathbf{0}$ and we compute attributions for predictions along the path $\bbeta(t)=t[1,1]^\top$.}
    \label{fig:hetero}
\end{figure}

\subsection{Breast Cancer Prognosis: A General Example}
In this case study, we consider the breast cancer prognostic data set of \cite{misc_breast_cancer_wisconsin_(prognostic)_16}, available through the UCI Machine Learning Repository. In this data set, there 30 features (10 measurements from 3 different recording sites) obtained from digitized images of nucleation sites in breast tissue. The goal of analyzing this data set is to predict either the recurrence time, or the disease-free time, of a patient after being screened. To this end, we trained a GPR model with an ARD-SE kernel to predict the disease-free time of patients in this data set. 
The goal of XAI in this setting is to understand, given our GPR predictive model, what features were contributing to the prediction, or in other words, which measurements at which recording sites best informed our prediction. We define the baseline $\xb$ to be average feature vector across all patients. In practice, feature attributions are sensitive to the choice of baseline point, and careful consideration is required to select a meaningful reference. As an example, we adopt the relatively simple approach of defining the baseline to be the average patient, and thus the IG method will attribute the change of a patient's disease-free time according to how an individual patient's measurements differ from the average patient.

In Figure \ref{fig:breast_cancer_a}, we visualize all attributions for a single patient. In Figure  \ref{fig:breast_cancer_b}, we visualize how two of these attributions vary across 50 different patients. Attributions in this case measure how variations in each feature contribute to the difference between the prediction for a given patient versus a reference patient with the baseline features. Thus, the value of a feature could contribute positively or negative towards this difference from the baseline. We observe that the GPR model with ARD-SE kernel prefers to attribute its predictions to only a few features, because the ARD-SE kernel automatically performs feature selection. Across patients, we observe that a feature might contribute strongly or weakly to a prediction. As the magnitude of an attribution increases in a particular feature, so too does the uncertainty in the attribution, which is something we observed in the Bayesian linear regression model of Section \ref{sec:bayeslinreg}

\begin{figure}
     \centering
     \begin{subfigure}[b]{0.9\textwidth}
         \centering
        \includegraphics[width=0.98\textwidth]{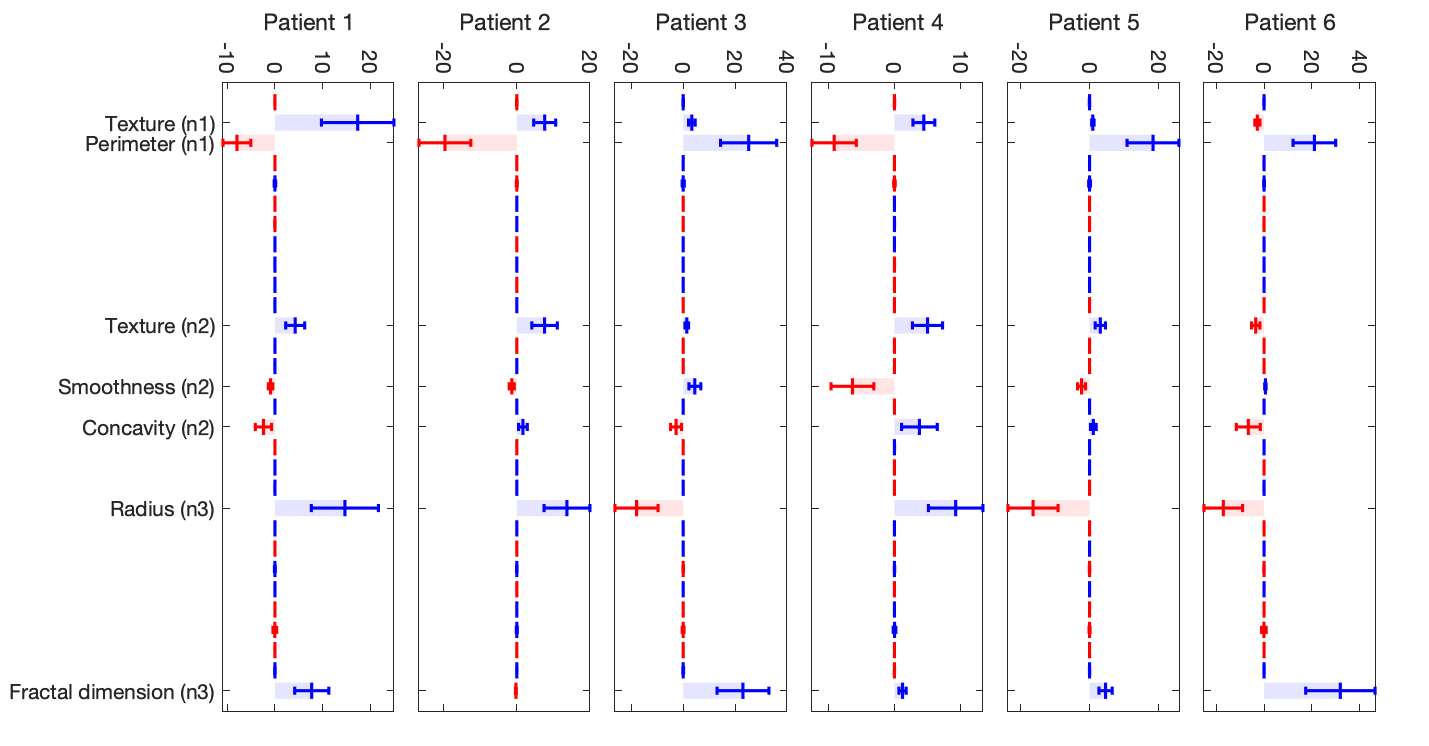}
        \caption{For 6 patients, we computed the feature attributions and their uncertainties. Feature names are only shown for consistently significant contributors to the prediction. The shaded region expresses the mean attribution and the error bars show one standard deviation in uncertainty. }
        \label{fig:breast_cancer_a}
     \end{subfigure}
     \hfill
     \begin{subfigure}[b]{0.95\textwidth}
         \centering
         \includegraphics[width=0.8\textwidth]{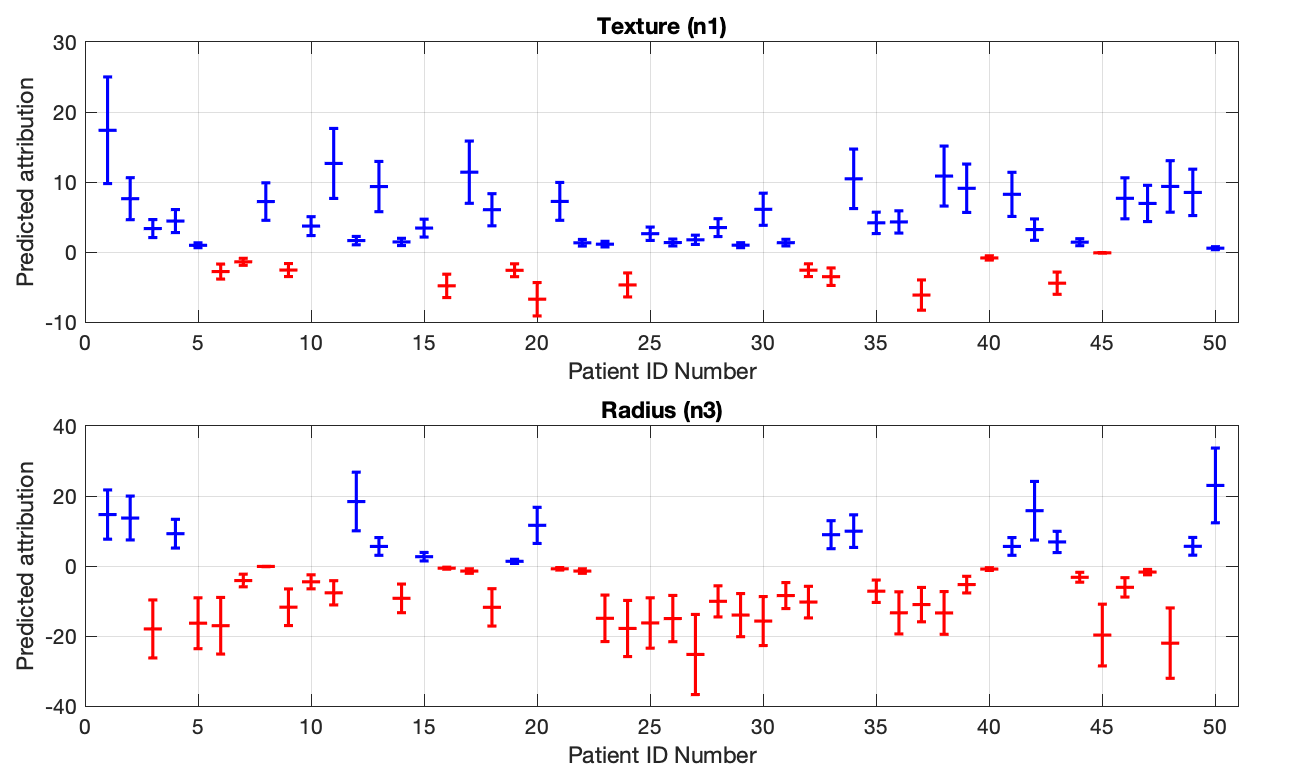}
         \caption{Two features are selected to visualize how confidence and mean attribution vary across 50 patients. 
         As in Figure 1, the midline represents the mean and the height of the cross is one standard deviation of $attr_i(\x|F)$ for each patient.}
         \label{fig:breast_cancer_b}
     \end{subfigure}
        \caption{Feature attribution for patients from the breast cancer data set. Predictions were made using a GPR model with ARD-SE kernel.}
        \label{fig:breast_cancer}
\end{figure}

\subsection{Taipei Housing Data: Demonstrating Affine Scale Invariance}
In this example, we consider real estate data in New Taipei City obtained from the UCI Machine Learning Repository \cite[]{misc_real_estate_valuation_data_set_477} in order to demonstrate a property of IG attributions called affine scale invariance (ASI). In this data set, there are 6 features used to predict the value of a house. After training a GPR model with ARD-SE kernel to predict the house value, we wish to attribute these predictions to each of the input features to understand how this information contributed to the predicted value. We take the baseline $\xb$ to be the average across all houses, and consider a prediction for a new house excluded from the training set. 

As in many ML applications, we normalized the input data before training the GPR model. However, we would like to know if this procedure has distorted our predicted attributions. Fortunately, the ASI property says that the IG attributions are invariant to scaling and translations in the input feature space \cite[]{lundstrom2022rigorous}. To formalize this statement, let us consider a mapping $T(\x) = \mathbf{\lambda}\odot \x + \mathbf{v}$, where $\odot$ represents a Hadamard (entrywise) product of vectors and all $\lambda_i \neq 0$. The mapping $T$ scales each coordinate $x_i$ by $\lambda_i$ and then translates by $v_i$. ASI asserts that
$$
attr_i(T(\x),T(\xb)|F\circ T^{-1})
= attr_i(\x,\xb|F), \qquad i=1,...,D,
$$
for any such transformation $T$. 

An important note about this result is that while the attributions do not theoretically depend upon normalization, the training of a ML model still may benefit from normalization, and this is especially true when using GPR. Thus, the ASI property suggests that normalization doesn't distort attributions by itself, but rather any change in the resulting attributions will be due to the change in how the ML model tunes it's parameters.

In Figure \ref{fig:housing}, we consider the attributions for a particular house both with and without normalization. We observe that although the input feature vectors appear noticeably different, the resulting attributions are identical. The normalized model also retains a degree of interpretability; we see that the given house has an above average distance to the MRT (subway station), and as a result this contributed negatively to the house value.

\begin{figure}
    \centering
    \includegraphics[width=0.7\textwidth]{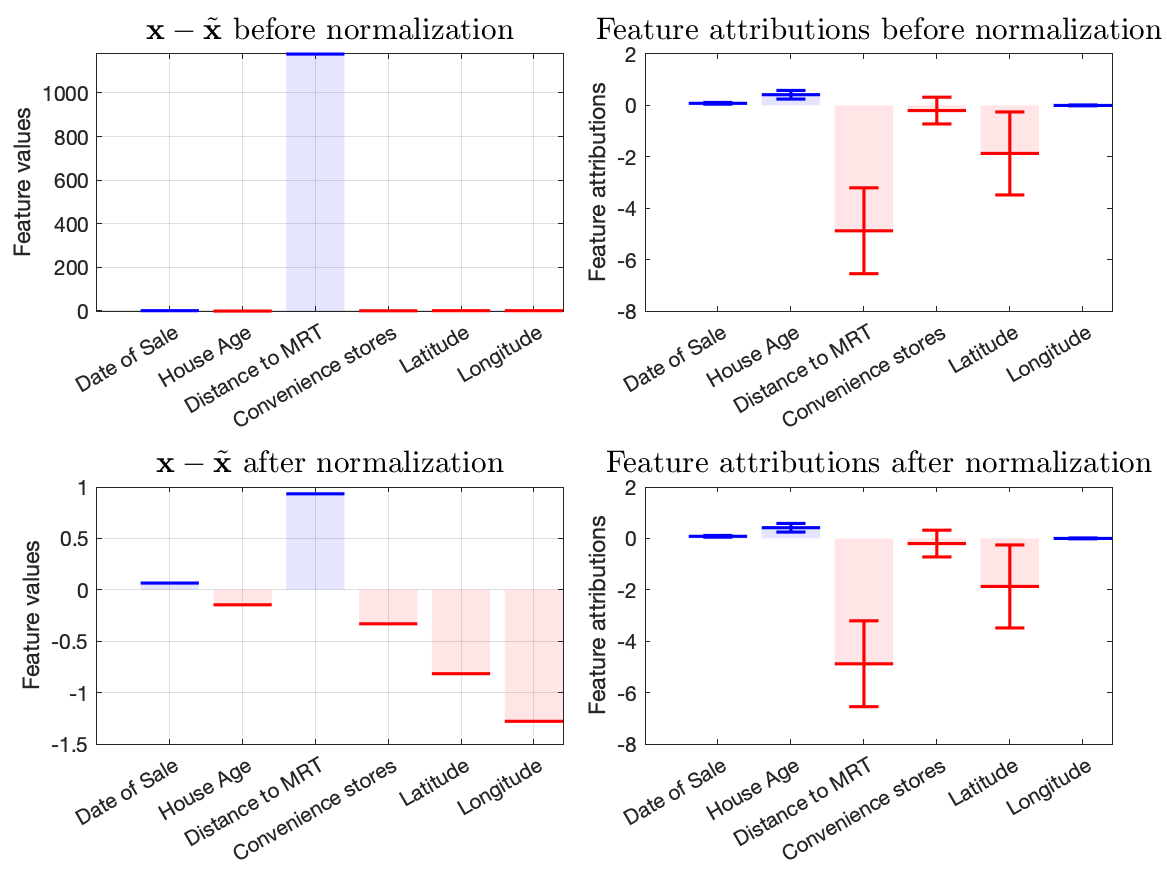}
    \caption{Feature attributions for a prediction using the Taipei Housing Data.}
    \label{fig:housing}
\end{figure}

\subsection{Wine: Approximating the Integral in IG Attributions}
In this experiment, we used the wine quality data set of \cite{Cortez2009ModelingWP}, available through the UCI Machine Learning Repository \cite[]{misc_wine_quality_186}. The goal of feature attribution with this data set is to attribute the predicted wine quality to the 11 physiochemical features that were used to make the prediction. Wines are scored on a scale of 1 to 10, with $5$ being an average quality wine. Prior to inference, the data set was randomly shuffled, the input features were normalized, and the baseline $\xb$ was chosen to be average of all wines which had average quality. We consider prediction for a particular instance $\x$, whose quality was above-average, which we omitted from the training set. 

In this experiment, we examine the effect of approximating the integral in the IG attribution formula \eqref{eq:IG}.
\cite{sundararajan2017axiomatic} do not compute this integral directly, but rather  they approximate it using a summation:
\begin{equation}
    \label{eq:approxIG}
(x_i-\tilde{x}_i) \int_0^1 \frac{\partial F(\xb + t(\x-\xb))}{\partial x_i} dt \approx \frac{(x_i-\tilde{x}_i)}{L} \sum_{l=1}^L \frac{\partial F(\xb + (l/L)(\x-\xb))}{\partial x_i}.
\end{equation}
This technique is known as the \emph{right-hand rule} for numerical integration. There exist other approaches for numerical approximation of the integral, called quadrature rules \cite[]{heath2018scientific}, which we compare to the right-hand rule in Appendix \ref{app:quadrature}.
Similar to the exact result, the approximation yields  a Gaussian distribution for the attribution \cite[]{seitz2022gradient}. This approximation closely approximates the distribution of the exact result in Theorem 1.
The quality of the approximation depends on the number of partitions $L$ used to compute the sum. 

In Figure \ref{fig:figgy4}, we visualize the convergence of the approximation to the exact GPR result as $L$ is increased. We consider the wine quality example as well as simulated data generated by  \eqref{eq:syntheticdatamodel}, to assess  how this convergence might depend on the data set and the learned model. When using the right-hand rule, $L$ is also the number of function evaluations needed to compute the approximation. In general, the exact result only needs to be evaluated once, so using a large $L$ to compute the attribution might become expensive if computed naively. How large $L$ should be to obtain a good approximation will depend on the data set and the complexity of the learned model. We also note that the approximation quality depends on the proximity to $\xb$, as points farther from the baseline will require more partitions to achieve the same resolution as points close to $\xb$.

\begin{figure}
     \centering
     \begin{subfigure}[b]{0.4\textwidth}
    \centering
    \includegraphics[width=0.99\textwidth]{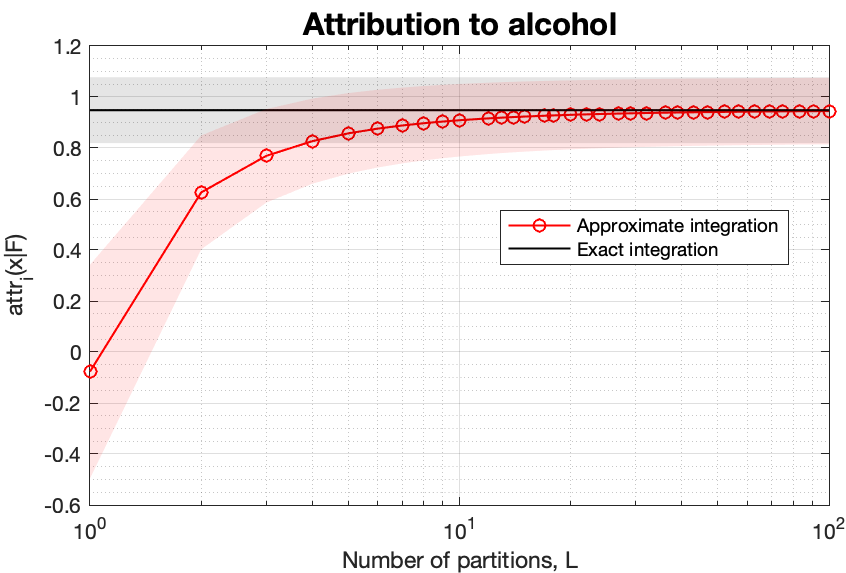}
    \caption{Results from the Wine quality data set.}
    \label{fig:approx_integral}
     \end{subfigure}
     \hfill\begin{subfigure}[b]{0.4\textwidth}
    \centering
    \includegraphics[width=0.99\textwidth]{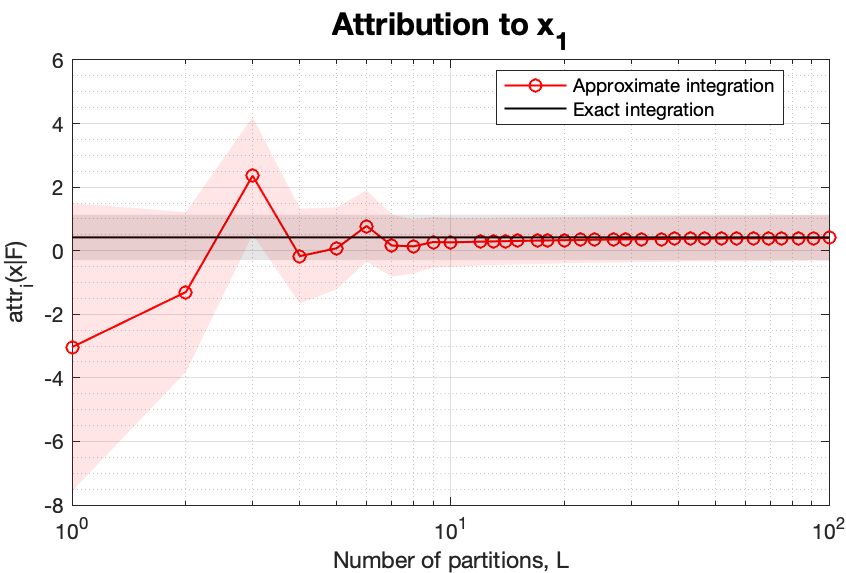}
    \caption{Results from the simulated system, \eqref{eq:syntheticdatamodel}.}
    \label{fig:approx_integral_2}
     \end{subfigure}
        \caption{Convergence of attributions with the approximated integral, \eqref{eq:approxIG}, to the exact attributions, as the number of partitions $L$ increases.}
        \label{fig:figgy4}
\end{figure}

\subsection{Wine: Comparing the Attributions of RFGPs and Exact GPR}
Much like the previous section, we used the wine quality data set to contrast an approximation with the exact attributions in Theorem 1. However, in this case we consider the RFGP approximation to the GPR kernel function.

In Figure \ref{fig:rfgp_convergence}, we consider a prediction from the wine quality data set, and we visualize the distribution of the attribution to a feature using the exact GPR result, and then how the RFGP approximation compares. We observe that as $M$ increases, the individual samples from the RFGP model become more similar, but often there is still a bias in the RFGP attributions, although this bias appears to decrease as we ad frequencies to the model. In addition to individual RFGPs, we consider a marginalized RFGP model, under which we averaged the distributions of 500 separate RFGP models.

\begin{figure}[h]
    \centering
    \includegraphics[width=0.97\textwidth]{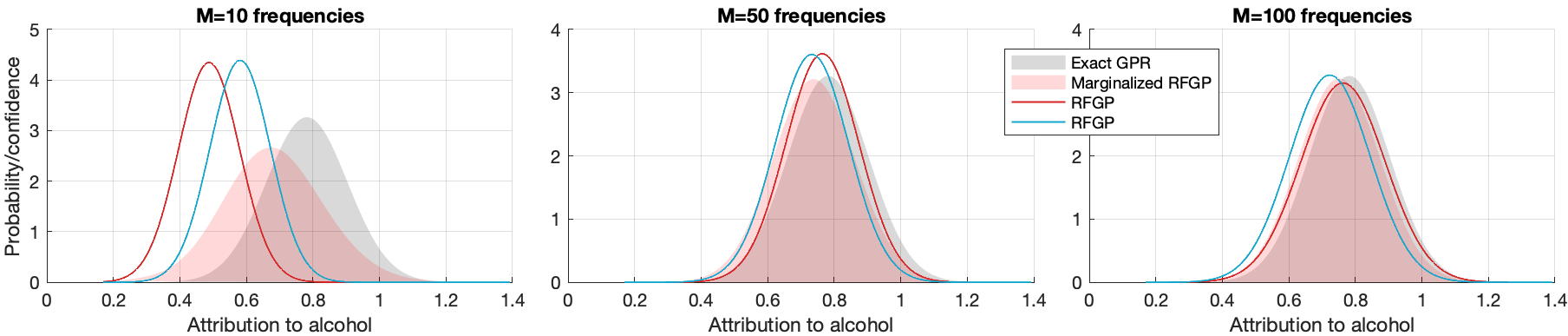}
    \caption{
    Attributions to alcohol in the Wine quality data set.
    As the number of frequencies used in the RFGP approximation, $M$, increases, the densities of the RFGP attributions more closely align with the exact GPR attributions. We show both individual realizations of the RFGP model (the blue and red curves), and a marginalized model which integrates over 500 independent RFGP models (the red shaded region).}
    \label{fig:rfgp_convergence}
\end{figure}

\section{Conclusion}
In this work, we develop a bridge between the theory of feature attribution and GPR. By representing attributions as distributions, we can quantify a model's confidence in its feature attributions. When the selected GPR model admits analytically tractable attributions, using closed-form expressions for the attributions is more computationally efficient, and it mitigates the need for computationally expensive or otherwise inaccurate approximations. While non-parametric methods like GPR, at face value, offer a non-interpretable approach to ML, our study of feature attribution may provide  more trust in GPR models. Furthermore, these results will extend the theoretical and practical scope of GPR models to more XAI domains.

\acks{This work was supported by the National Science Foundation under Awards 2021002 and 2212506.
}

\appendix
\section{Proofs}
\label{app:proofsgp}
In this appendix, we provide mathematical proofs for the lemmas and main theorem of the paper. We divide Appendix A. into four sections: In Section \ref{app:gaussops}, we introduce some additional technical lemmas.
In Section \ref{app:proofdiff}, we derive the expression for the derivatives of a GP, and in Section \ref{app:proofint}, we do the same for definite integrals of a GP.
Finally, in Section \ref{app:proofthm} we prove that the IG attributions of a GPR model are GPs.

We use the notation $C^\ell (\mathcal{X})$ to denote the set of all $\ell$-times continuously differentiable functions $\mathcal{X} \to \mathbb{R}$, where $\ell=1,2,...$ or $\ell=\infty$. We may write $C^\ell$ when the functions' domain is clear from context.

\subsection{Operations that Preserve Gaussianity}
\label{app:gaussops}
We say that a random process $(X(t))_{t\in T}$, with index set $T$, is a GP if for every finite set of indices $t_1,...,t_D$, the vector $(X(t_1),...,X(t_D))$ has a multivariate (possibly degenerate) Gaussian distribution. We use the notation $X(t) \sim \gp(m,k)$, where $m(t) = \mean(X(t))$ and $k(t,t') = \cov(X(t),X(t'))$. 
From this definition, we can immediately show that scaling and precompositions preserve Gaussianity:

\setcounter{theorem}{2}
\begin{lemma}[Scaling preserves Gaussianity]
    Let $X(t) \sim \gp(m,k)$ be a GP with index set $T$ and let $g:T \to \mathbb{R}$ be a function. 
    Then $g(t)X(t)$ is a GP, and for any finite set of indices $t_1,...,t_D$, we have that
    \begin{equation}
\label{eq:scaledgp}
    \begin{bmatrix}
        g(t_1) X(t_1)
        \\ \vdots
        \\ g(t_D) X(t_D)
    \end{bmatrix}
    \sim
    \mathcal{N} \left( 
    \begin{bmatrix}
        g(t_1) m(t_1)
        \\ \vdots
        \\ g(t_D) m(t_D)
    \end{bmatrix}
    ,
    \begin{bmatrix}
        g(t_1)^2 k(t_1,t_1) & \cdots & g(t_1)g(t_D) k(t_1,t_D)
        \\ \vdots & & \vdots
        \\ g(t_D) g(t_1) k(t_D,t_1)& \cdots & g(t_D)^2 k(t_D,t_D) 
    \end{bmatrix}\right).
    \end{equation}
\end{lemma}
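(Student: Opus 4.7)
The plan is to reduce the claim to the elementary fact that affine transformations of Gaussian vectors are Gaussian. Fix any finite collection of indices $t_1,\dots,t_D \in T$. By the hypothesis $X(t)\sim \gp(m,k)$, the random vector $\mathbf{X} \stackrel{\Delta}{=} (X(t_1),\dots,X(t_D))^\top$ has a (possibly degenerate) multivariate Gaussian distribution with mean vector $\mathbf{m} = (m(t_1),\dots,m(t_D))^\top$ and covariance matrix $\mathbf{K}$ whose $(i,j)$ entry is $k(t_i,t_j)$.

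Next, introduce the deterministic diagonal matrix $\mathbf{G} \stackrel{\Delta}{=} \mathrm{diag}(g(t_1),\dots,g(t_D))$, so that the vector of interest can be written compactly as $(g(t_1)X(t_1),\dots,g(t_D)X(t_D))^\top = \mathbf{G}\mathbf{X}$. Since linear maps of Gaussian vectors are Gaussian, $\mathbf{G}\mathbf{X}$ is multivariate Gaussian with mean $\mathbf{G}\mathbf{m}$ and covariance $\mathbf{G}\mathbf{K}\mathbf{G}^\top$. A short entrywise computation, using that $\mathbf{G}$ is diagonal, shows $(\mathbf{G}\mathbf{K}\mathbf{G}^\top)_{ij} = g(t_i)g(t_j)k(t_i,t_j)$ and $(\mathbf{G}\mathbf{m})_i = g(t_i)m(t_i)$, which matches the mean vector and covariance matrix displayed in \eqref{eq:scaledgp}. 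Since the collection of indices was arbitrary, this verifies the finite-dimensional consistency that defines a GP, and reading off the mean and covariance functions gives $g(t)X(t)\sim\gp(gm,\,gg'k)$ in the sense of the lemma.

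There is essentially no obstacle here; the only subtlety is to remember that the definition of a GP adopted earlier in the appendix explicitly permits degenerate Gaussian marginals, so the argument goes through unchanged even if some of the scalars $g(t_i)$ vanish (in which case the corresponding entries of $\mathbf{G}\mathbf{X}$ are almost surely zero and the covariance matrix is singular). For this reason, I would state the lemma without any additional hypothesis on $g$ and simply flag the degenerate case in a parenthetical remark. The proof fits in a few lines and will be invoked later in the appendix when decomposing the IG attribution operator into the sequence "differentiate, precompose with a path, integrate, and scale by $x_i-\tilde x_i$" — the scaling step being precisely what this lemma handles.
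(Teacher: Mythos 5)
Your proof is correct and follows essentially the same route as the paper's: represent the pointwise scaling as multiplication by the diagonal matrix $\mathrm{diag}(g(t_1),\dots,g(t_D))$ and invoke closure of Gaussian vectors under linear maps. The only additions are the explicit entrywise verification of the mean and covariance and the remark about degeneracy when some $g(t_i)=0$, both of which are consistent with (and slightly more detailed than) the paper's argument.
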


\begin{proof}
    For any set of indices $t_1,...,t_D$, the pointwise scaling by $g(t)$ can be represented in matrix form, 
    $$
    \begin{bmatrix}
        g(t_1) X(t_1)
        \\ \vdots
        \\ g(t_D) X(t_D)
    \end{bmatrix}
    =
    \begin{bmatrix}
        g(t_1) & & \\
        & \ddots & \\
        & & g(t_D) 
    \end{bmatrix}
    \begin{bmatrix}
        X(t_1)
        \\ \vdots
        \\ X(t_D)
    \end{bmatrix}.
    $$
    Multiplying a Gaussian vector by a matrix preserves Gaussianity, so the GP property is upheld for the scaled process.
\end{proof}

\begin{lemma}[Precompositions preserve Gaussianity]
Let $X(t) \sim \gp(m,k)$ be a GP with index set $T$, and let $\gamma:S \to T$ be a function between sets. Then $X(\gamma(s))$ is a GP indexed by $S$, and for any finite set of indices $s_1,...,s_D$, 
\begin{equation}
\label{eq:pullbackgp}
    \begin{bmatrix}
        X(\gamma(s_1))
        \\ \vdots
        \\ X(\gamma(s_D))
    \end{bmatrix}
    \sim
    \mathcal{N} \left( 
    \begin{bmatrix}
        m(\gamma(s_1))
        \\ \vdots
        \\ m(\gamma(s_D))
    \end{bmatrix}
    ,
    \begin{bmatrix}
        k(\gamma(s_1),\gamma(s_1)) & \cdots &  k(\gamma(s_1),\gamma(s_D))
        \\ \vdots & & \vdots
        \\ k(\gamma(s_D),\gamma(s_1))& \cdots &  k(\gamma(s_D),\gamma(s_D)) 
    \end{bmatrix}\right).
    \end{equation}
\end{lemma}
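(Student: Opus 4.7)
The plan is to observe that this lemma is essentially immediate from the definition of a Gaussian process given at the start of Appendix~\ref{app:gaussops}. A process $(Y(s))_{s \in S}$ is a GP iff every finite subcollection $(Y(s_1), \ldots, Y(s_D))$ is jointly (possibly degenerately) Gaussian, with the mean vector and covariance matrix given by the $m$ and $k$ functions evaluated at the sampled indices. Since the claim concerns only finite-dimensional distributions of the pulled-back process $s \mapsto X(\gamma(s))$, no machinery beyond this definition is needed.

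Concretely, I would fix an arbitrary finite tuple $s_1, \ldots, s_D \in S$ and apply $\gamma$ pointwise to obtain a finite tuple $\gamma(s_1), \ldots, \gamma(s_D) \in T$. Since $X \sim \gp(m,k)$ with index set $T$, invoking the GP property of $X$ on this finite tuple yields that $(X(\gamma(s_1)), \ldots, X(\gamma(s_D)))$ is jointly Gaussian with mean vector $(m(\gamma(s_i)))_i$ and covariance matrix $(k(\gamma(s_i), \gamma(s_j)))_{i,j}$. This is exactly the content of equation~(\ref{eq:pullbackgp}). Since the finite tuple was arbitrary, every finite-dimensional distribution of $X \circ \gamma$ is Gaussian, so $X(\gamma(s))$ is a GP.

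There is no serious obstacle here; the result is essentially a tautology given the finite-dimensional characterization of a GP. The only point worth flagging explicitly is that $\gamma$ is permitted to be an arbitrary function with no injectivity, measurability, or continuity assumption. If $\gamma$ happens to identify two indices, say $\gamma(s_i) = \gamma(s_j)$ for $i \neq j$, then the resulting covariance matrix in~(\ref{eq:pullbackgp}) will be singular, but this only corresponds to a degenerate Gaussian, which is permitted by the definition used in this appendix. In contrast to the differentiation and integration lemmas stated earlier, no limiting argument or regularity hypothesis on $m$ or $k$ is required, since we never pass outside the finite-dimensional setting.
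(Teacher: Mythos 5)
Your proof is correct and matches the paper's own argument: both simply set $t_i = \gamma(s_i)$, invoke the finite-dimensional definition of a GP on that tuple, and note that non-injectivity of $\gamma$ merely yields a degenerate (singular-covariance) Gaussian, which the definition permits. No gaps.
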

\begin{proof}
    Let $t_i = \gamma(s_i)$ for $i=1,...,D$. Then $t_1,...,t_D$ is a set of indices in $T$ and hence 
    $$
    \begin{bmatrix}
        X(\gamma(s_1))
        \\ \vdots
        \\ X(\gamma(s_D))
    \end{bmatrix}
    =
    \begin{bmatrix}
        X(t_1)
        \\ \vdots
        \\ X(t_D)
    \end{bmatrix}
    \sim 
    \mathcal{N} \left( 
    \begin{bmatrix}
        m(t_1)
        \\ \vdots
        \\ m(t_D)
    \end{bmatrix}
    ,
    \begin{bmatrix}
        k(t_1,t_1) & \cdots &  k(t_1,t_D)
        \\ \vdots & & \vdots
        \\ k(t_D,t_1)& \cdots &  k(t_D,t_D) 
    \end{bmatrix}\right).
    $$
    The only thing to notice here is that if $\gamma$ is not injective (one-to-one), then the new GP is degenerate, regardless if the original GP was non-degenerate. 
\end{proof}

\subsection{Derivatives of GPs}
\label{app:proofdiff}
As noted in the main text, differentiation and integration also preserve Gaussianity. To show this, we use the following proposition from \cite[p.3]{le2016brownian}:

\setcounter{theorem}{0}
\begin{prop}
\label{prop:legall}
    Let $(X_n)_{n\geq 1}$ be a sequence of real random variables such that, for every $n\geq 1$, $X_n$ follows a $\mathcal{N}(m_n,\sigma^2_n)$ distribution. Suppose that $X_n$ converges in $L^2$ to $X$. Then:
    \begin{enumerate}
        \item The random variable $X$ follows the $\mathcal{N}(m,\sigma^2)$ distribution, where $m=\lim_{n\to \infty} m_n$ and $\sigma^2 = \lim_{n\to \infty} \sigma^2_n$.
        \item This convergence also holds in all $L^p$ spaces, $1\leq p < \infty$.
    \end{enumerate}
\end{prop}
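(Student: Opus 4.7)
The plan is to decouple the three claims and handle them in order: first extract the limits $m$ and $\sigma^2$, then identify $X$ as Gaussian, and finally upgrade $L^2$ convergence to $L^p$ convergence.

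First, I would observe that $L^2$ convergence of $X_n$ to $X$ automatically forces convergence of the first two moments. By Cauchy--Schwarz, $|\mean X_n - \mean X| = |\mean(X_n - X)| \leq \|X_n - X\|_{L^2}$, so $m_n \to \mean X$, which we then \emph{define} to be $m$. Likewise $\|X_n\|_{L^2}^2 \to \|X\|_{L^2}^2$ by the reverse triangle inequality in $L^2$, and combining with $m_n \to m$ gives $\sigma_n^2 = \mean X_n^2 - m_n^2 \to \mean X^2 - m^2 =: \sigma^2$. This establishes existence of the limits.

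Second, to identify $X$ as Gaussian I would use characteristic functions. $L^2$ convergence implies convergence in probability and hence in distribution, so $\mean(e^{itX_n}) \to \mean(e^{itX})$ for every $t \in \mathbb{R}$. The left-hand side is explicitly $\exp(itm_n - \tfrac{1}{2}\sigma_n^2 t^2)$, which by the first step converges to $\exp(itm - \tfrac{1}{2}\sigma^2 t^2)$. Uniqueness of characteristic functions then gives $X \sim \mathcal{N}(m,\sigma^2)$, where the case $\sigma^2 = 0$ is admitted as a degenerate Gaussian concentrated at $m$.

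Third, for the $L^p$ extension I would separate $p \leq 2$ from $p > 2$. The case $1 \leq p \leq 2$ follows immediately from Jensen's inequality on the probability space: $\|X_n - X\|_{L^p} \leq \|X_n - X\|_{L^2} \to 0$. The case $p > 2$ is the main obstacle and is where I would invest most of the effort. The key observation is that because $X_n$ is Gaussian with parameters $(m_n, \sigma_n^2)$, every moment $\mean|X_n|^q$ is a fixed polynomial in $m_n$ and $\sigma_n$, and since $(m_n, \sigma_n)$ is convergent (hence bounded), $\sup_n \mean|X_n|^q < \infty$ for every $q \geq 1$. Combined with the bound $|X_n - X|^p \leq 2^{p-1}(|X_n|^p + |X|^p)$ and the fact that $X$ itself is Gaussian from the second step, the sequence $(|X_n - X|^p)_n$ is bounded in $L^{q/p}$ for any chosen $q > p$, i.e.\ in some $L^r$ with $r > 1$. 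By de la Vall\'ee-Poussin this yields uniform integrability, and Vitali's convergence theorem (applied to convergence in probability plus uniform integrability) concludes $\|X_n - X\|_{L^p} \to 0$. Aside from this uniform-integrability step, every piece of the argument is standard bookkeeping; the subtle point worth flagging is simply that the statement must be read as allowing a degenerate limit when $\sigma^2 = 0$.
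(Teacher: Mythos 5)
Your proof is correct. Note, however, that the paper does not actually prove this proposition: it is quoted verbatim from Le~Gall's textbook and used as a black box, so there is no in-paper argument to compare against. Your three-step argument is the standard one for this result --- extracting $m$ and $\sigma^2$ directly from $L^2$ convergence via Cauchy--Schwarz and the reverse triangle inequality, identifying the limit law through characteristic functions (correctly admitting the degenerate case $\sigma^2=0$), and upgrading to $L^p$ for $p>2$ by observing that the Gaussian moments $\mathbb{E}|X_n|^q$ are uniformly bounded polynomials in the convergent parameters, which gives uniform integrability of $|X_n-X|^p$ and lets Vitali close the argument. Every step checks out.
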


We note that the $L^2$ convergence is automatically satisfied whenever the sequences $m_n$ and $\sigma^2_n$ converge to something reasonable (in essence, when $0<\sigma^2 < \infty$). 
Using this proposition, we can now prove results for derivatives and integrals of GPs. 

\begin{proof} \textbf{of Lemma \ref{lemma:diff}}
    To start, we recall the typical definition of a partial derivative:
    $$
    \frac{\partial F}{\partial x_i} = \lim_{h\to 0} \frac{F(\x+h\mathbf{e}_i) - F(\x)}{h},
    $$
    where $\mathbf{e}_i$ is the unit vector in the $i$-th coordinate direction. To take the limit, define a sequence $h_n=1/n$, for $n\in \mathbb{Z}_{>0}$, and consider the sequence of functions $F_n(\x) = (F(\x+h_n)-F(\x))/h_n$. We see that each $F_n$ is a GP, and thus we have a sequence of GPs, $F_n\sim \mathcal{GP}(m_{i,n},k_{i,n})$. By Proposition \ref{prop:legall}, a sequence of Gaussian variables converges to a Gaussian if the mean and variance converge. In this case,
    $$
    \lim_{n\to \infty} m_{i,n} = \lim_{n\to \infty} \mean \left( \frac{ F(\x + \frac{1}{n} \mathbf{e}_i) - F(\x)}{1/n} \right)
    = \lim_{n\to \infty} \frac{m(\x + \frac{1}{n} \mathbf{e}_i) - m(\x)}{1/n}
    = \frac{\partial m}{\partial x_i},
    $$
    which exists since $m\in C^1$. For the covariance, we have that
    $$
    k_{i,n}(\x,\x') = \text{Cov}\left( \frac{F(\x + \frac{1}{n} \mathbf{e}_i) - F(\x)}{1/n} , \frac{F(\x' + \frac{1}{n} \mathbf{e}_i) - F(\x')}{1/n}  \right),
    $$
    which can then be expressed as
    $$
    k_{i,n}(\x,\x') = \frac{k(\x+\frac{1}{n}\mathbf{e}_i, \x'+\frac{1}{n}\mathbf{e}_i) - k(\x,\x'+\frac{1}{n}\mathbf{e}_i) - k(\x+\frac{1}{n}\mathbf{e}_i,\x') + k(\x,\x')}{n^2}.
    $$
    Taking the limit then yields
    $$
    \lim_{n\to \infty} k_{i,n}(\x,\x') = \frac{\partial^2 k}{\partial x_i \partial x_i'},
    $$
    which exists because $k\in C^2$. Thus, by Proposition \ref{prop:legall} we have that $\partial F/\partial x_i$ is Gaussian whose mean and covariance are specified by the functions $\partial m/\partial x_i$ and $\partial^2 k/\partial x_i \partial x_i'$.
    To conclude the proof, notice that for any finite set of points $\x_1,...,\x_L$, the vector $[F(\x_1) \cdots F(\x_L)]$ will again be multivariate Gaussian with mean and covariance controlled by these functions. Since this property holds arbitrarily, the Kolmogorov extension theorem asserts that a GP exists with the desired mean and covariance functions.
\end{proof}

We might say that a GP is $\ell$-times continuously differentiable if the mean and covariance functions are $C^\ell$ and $C^{2\ell}$, respectively, since this ensures that functions sampled from the GP are $C^\ell$ almost-surely.
Using this notion of differentiability, we can discuss how scaling and precomposition affect differentiability.   The equations \eqref{eq:scaledgp} and \eqref{eq:pullbackgp} show that the output mean and covariance functions are $C^\ell$ whenever $g$, $\gamma$ and the original $m$ and $k$ functions are $C^\ell$. Hence, scaling and precompositions by smooth functions preserve both smoothness and Gaussianity. 
Lemma 1 shows that differentiating a $C^\ell$ GP yields a $C^{\ell-1}$ GP.

\subsection{Integrals of GPs}
\label{app:proofint}
The integration of a GP is established using similar techniques to differentiation. For computing attributions, we only need to know how to compute a one-dimensional integral, over a line segment.

\begin{proof}\textbf{of Lemma \ref{lemma:integral}}
    Since the integral is being taken over a line segment, it suffices to consider the Riemannian definition of the integral as the limit over increasingly fine partitions of the line. Hence we define
    $$
    F_n(x) = \sum_{i=0}^n  \frac{1}{n} f\left( a + \frac{i (x-a)}{n} \right),
    $$
    and $F(x)$ to be the limit of $F_n(x)$ as $n\to\infty$. Similar in the proof for derivatives of a GP, we need to compute the mean and covariance functions $F_n(x)$, show that they converge, and apply Proposition \ref{prop:legall}. For the mean, the summation becomes
    $$
    \mean (F_n(x)) = \sum_{i=0}^n \frac{\mean (f(a+ i(x-a)/n))}{n}
    = \sum_{i=0}^n \frac{m(a+i(x-a)/n)}{n},
    $$
    and since $m$ is continuous on the interval, the Riemann sum converges to the integral. Thus,
    $$
    \mean(F_n(x)) \to  \int_a^x m(t)dt = M(x),
    $$
    as $n\to \infty$, and we denote the result as a function $M$ of $x$.
    For the covariance, we have that 
    \begin{align*}
    \text{Cov}\left( F_n(x), F_n(x') \right) 
    &= 
    \text{Cov}\left( \sum_{i=0}^n \frac{f(a+ i(x-a)/n)}{n}, \sum_{j=0}^n \frac{f(a+ j(x'-a)/n)}{n}\right)
    \\ &= \sum_{i=0}^n \sum_{j=0}^n \frac{1}{n^2} \cov \left( f\left( a + \frac{i(x-a)}{n}\right) , f\left(a+ \frac{j(x'-a)}{n}\right) \right)
    \\ &= \sum_{i=0}^n \sum_{j=0}^n \frac{1}{n^2} k\left( a + \frac{i(x-a)}{n} , a+ \frac{j(x'-a)}{n} \right),
    \end{align*}
    which converges, again because $k$ is continuous, to the double integral, which we define to be $K$, i.e.,
    $$
    \lim_{n \to \infty} \text{Cov}\left( F_n(x), F_n(x') \right) 
    =
    \int_a^x \int_a^{x'} k(s,t) ds dt = K(x,x'),
    $$
    as $n \to \infty$. Since the mean and covariance functions converge, we can conclude the convergence of $F_n(x) \to F(x)$ for a given point $x$. 
    As in the proof of Lemma 1, for any finite set of points $x_1,...,x_L$, we want to show that the joint distribution of $F(x_1),...,F(x_L)$ coincides with the Gaussian distribution specified by the functions $M$ and $K$. Notice that for fixed $n$, 
    $$
    \mathbf{F}_n(x_1,...,x_L) \stackrel{\Delta}{=} 
    \sum_{i=1}^n \frac{1}{n} \begin{bmatrix}
        f\left(a + \frac{i(x_1-a)}{n} \right) \\
        \vdots \\
        f\left(a + \frac{i(x_L-a)}{n} \right)
    \end{bmatrix}
    =
    \begin{bmatrix}
         \sum_{i=1}^n \frac{1}{n}  f\left(a + \frac{i(x_1-a)}{n} \right) \\
        \vdots \\
        \sum_{i=1}^n \frac{1}{n}  f\left(a + \frac{i(x_L-a)}{n} \right)
    \end{bmatrix}
    = 
    \begin{bmatrix}
         F_n(x_1) \\ \vdots \\ F_n(x_L)
    \end{bmatrix},
    $$
    and thus
    $$
    \lim_{n \to \infty} \mathbf{F}_n (x_1,...,x_L) =
    \lim_{n \to \infty} 
    \begin{bmatrix}
         F_n(x_1) \\ \vdots \\ F_n(x_L)
    \end{bmatrix} =
    \begin{bmatrix}
         F(x_1) \\ \vdots \\ F(x_L)
    \end{bmatrix}.
    $$
    From the above calculations, we then know that $\lim_{n \to \infty} \mathbf{F}_n (x_1, ... ,x_L)$ obeys a multivariate Gaussian law with mean and covariance determined by $M$ and $K$, respectively. By the Kolmogorov extension theorem, there exists a GP determined by these functions.%
\end{proof}

\subsection{Attributions of a GP}
\label{app:proofthm}
Given the previous lemmas, we are now equipped to compute the IG attributions of a GPR model and prove the theorem in Section \ref{sec:thm}. Let $F(\x)$ be a GP over $\mathbb{R}^D$. We fix a baseline point $\xb$ and recall the definition of the $attr_i$ operator:
    $$
    attr_i(\x|F) = (x_i-\tilde{x}_i) \int_0^1 \frac{\partial F(\xb + t(\x-\xb))}{\partial x_i} dt.
    $$
    The attribution is defined by four sequential operations,
    \begin{enumerate}
    \item Taking the partial derivative $\partial/\partial x_i$ of $F$,
        \item Precomposition of $\partial F/\partial x_i$ with the function $\bgamma(t) = \xb + t(\x-\xb)$,
        \item Integration from $t=0$ to $t=1$, and 
        \item Scaling the result by $(x_i-\tilde{x})$. 
    \end{enumerate}
    Together, we observe that these operations all preserve Gaussianity, and thus to find the attribution GP it suffices to compute its mean and covariance, whenever they exist. As a simple condition to ensure existence, we require that $m\in C^1(\mathbb{R}^D)$ and $k\in C^2(\mathbb{R}^D,\mathbb{R}^D)$, which guarantees that we can differentiate the GP and later integrate it.

\begin{proof}\textbf{of Theorem \ref{thm:gpattr}}
    The existence of the attribution GP follows directly from the lemmas.
     We show the computation for the reader's convenience, implicitly invoking Lemmas 1, 2, 3, and 4:
    \begin{align*}
        \mean(F(\x)) &= m(\x), & \\
        \mean\left( \frac{\partial F(\x)}{\partial x_i}\right) &= \frac{\partial m(\x)}{\partial x_i}, \\
        \mean\left( \frac{\partial F(\bgamma(t))}{\partial x_i}\right) &= \frac{\partial m(\bgamma(t))}{\partial x_i},  \\
        \mean\left( \int_0^1 \frac{\partial F(\bgamma(t))}{\partial x_i} dt \right) &= \int_0^1 \frac{\partial m(\xb + t(\x-\xb))}{\partial x_i} dt, \\
        \mean(attr_i(\x|F)) &= 
        \mean\left( (x_i-\tilde{x}_i) \int_0^1 \frac{\partial F(\bgamma(t))}{\partial x_i} dt \right) \\ &= (x_i-\tilde{x}_i) \int_0^1 \frac{\partial m(\xb + t(\x-\xb))}{\partial x_i} dt.
    \end{align*}
    The resulting mean attribution function is defined since $m \in C^1$ implies that $\partial m/\partial x_i$ is $C^0$. Since the derivative is continuous, it is also integrable over the interval $[0,1]$. 
    The calculation for the covariance function is more tedious, but follows the same logic. Let us define $\bbeta(\x,t) = \xb + t(\x-\xb)$ for conciseness. We find that
    \begin{align*}
        \cov(F(\x),F(\x')) &= k(\x,\x'), \\
        \cov\left(\frac{\partial F(\x)}{\partial x_i} , \frac{\partial F(\x')}{\partial x_i} \right) &= \frac{\partial^2 k(\x,\x')}{\partial x_i \partial x_i'}, \\
        \cov\left(\frac{\partial F(\bbeta(\x,s))}{\partial x_i} , \frac{\partial F(\bbeta(\x',t))}{\partial x_i}  \right) &= \frac{\partial^2 k(\bbeta(\x,s),\bbeta(\x',t))}{\partial x_i \partial x_i'}, \\
        \cov\left( \int_0^1 \frac{\partial F(\bbeta(\x,s))}{\partial x_i} ds, \int_0^1 \frac{\partial F(\bbeta(\x',t))}{\partial x_i}  dt \right) &=  \int_0^1 \int_0^1 \frac{\partial^2 k(\bbeta(\x,s),\bbeta(\x',t))}{\partial x_i \partial x_i'} ds dt,
    \end{align*}
    and finally, 
    \begin{align*}
        \cov(attr_i(\x),attr_i(\x')) &=
        \cov\left( (x_i-\tilde{x}_i) \int_0^1 \frac{\partial F(\bbeta(\x,s))}{\partial x_i} ds, (x_i'-\tilde{x}_i) \int_0^1 \frac{\partial F(\bbeta(\x',t))}{\partial x_i}  dt \right)
        \\ &= (x_i-\tilde{x}_i)(x_i'-\tilde{x}_i) \cov\left( \int_0^1  \frac{\partial F(\bbeta(\x,s))}{\partial x_i} ds, \int_0^1 \frac{\partial F(\bbeta(\x',t))}{\partial x_i}  dt \right)
        \\ &= (x_i-\tilde{x}_i)(x_i'-\tilde{x}_i)  \int_0^1 \int_0^1 \frac{\partial^2 k(\bbeta(\x,s),\bbeta(\x',t))}{\partial x_i \partial x_i'} ds dt.
    \end{align*}
    Again, to see that this expression is defined, we recall that $k\in C^2$, and therefore $\partial^2 k/\partial x_i \partial x_i' \in C^0$, and thus is integrable over the unit square.
    
    To show completeness of the output GP, we need to know that almost-surely every sample from a differentiable GP is differentiable. Let $f$ be any differentiable sample from the GP. The attributions $attr_i(\x|f)$ satisfy completeness by definition, and thus $\sum_i attr_i(\x|f) = f(\x) - f(\xb)$. Since this holds for almost-every $f$, the distributions of the left and right hand sides coincide. We can see from either side that the distribution is Gaussian, and we can compute that
    $$
    \mean(F(\x)-F(\xb)) = m(\x) - m(\xb),
    $$
    and
    \begin{align*}
    \cov(F(\x)-F(\xb),F(\x')-F(\xb))
    &=  \cov(F(\x),F(\x'))
    +  \cov(F(\xb),F(\xb))
    \\&\quad-  \cov(F(\x),F(\xb)) - \cov(F(\xb),F(\x'))
    \\&= k(\x,\x')  + k(\xb,\xb) 
    \\& \quad- k(\x,\xb) - k(\x',\xb).
    \end{align*}
\end{proof}

If $F\sim \gp(m,k)$ is the GPR posterior predictive distribution, then $attr_i(\x|F)\sim \gp$ whenever $m\in C^1$ and $k\in C^2$. If we satisfy the more refined assumption that $m\in C^\infty$ and $k\in C^\infty$, then we have that $attr_i(\x|F) \in C^\infty$ as well.

\section{Derivations for Specific Models}
\label{app:derivations}
In this section, we derive the expressions for the mean and variance of the attributions for specific models. 

\subsection*{ARD-SE Kernel}
To derive the GPR attributions when using an ARD-SE kernel, it is instructive to work with Corollary 1. From here, we observe that one must derive expressions for $\mathbf{attr}_i(\x|\mathbf{k})$ and the double integral in \eqref{eq:attrcov}. In the main text, we defined 
$$
A_{n,i}(\x) \stackrel{\Delta}{=} attr_i(\x|k(\cdot,\x_n)),
$$
which are the entries of the vector $\mathbf{attr}_i(\x|\mathbf{k})=[A_{n,i}]_{n=1:N}$. To compute $A_{n,i}$, we must apply the $attr_i$ operator to the kernel function $k$, which we know to be the ARD-SE function,
$$
k_\text{ARD-SE}(\x,\x') = \sigma^2_f \exp \left( -\sum_{i=1}^D \frac{(x_i-x_i')^2}{2\ell_i^2} \right).
$$
If we let $\x'=\x_n$ and $\mathbf{L}=\text{diag}([\ell_1 \cdots \ell_D])$, then we can write out the kernel function in matrix form:
$$
k(\x,\x_n) = \sigma^2_f \exp \left( -\frac{1}{2} (\x-\x_n)^\top \mathbf{L}^{-2} (\x-\x_n) \right).
$$
The partial derivative of this is
$$
\frac{\partial k(\x,\x_n)}{\partial x_i}
=
 \sigma^2_f \exp \left( -\frac{1}{2} (\x-\x_n)^\top \mathbf{L}^{-2} (\x-\x_n) \right) \left(-\frac{x_i - x_{n,i}}{\ell_i^2}\right).
$$
The expression begins to look hairy when we precompose with $\bgamma(t) = \xb + t(\x-\xb)$:
\begin{align}
\frac{\partial k(\bgamma(t),\x_n)}{\partial x_i}
&= 
 \sigma^2_f \exp{\left( -\frac{1}{2} (\xb + t(\x-\xb)-\x_n)^\top \mathbf{L}^{-2} (\xb + t(\x-\xb)-\x_n) \right)}\notag\\
 &\times \left(-\frac{\tilde{x}_i + t(x_i-\tilde{x}_i) - x_{n,i}}{\ell_i^2}\right).\notag
\end{align}
At this point, the most challenging task is to compute the integral from $t=0$ to $t=1$. The trick is to notice that the expression above is of the form
$$
\exp(-at^2 -bt - c)(d\times t + f)
$$
as a function of $t$, where $a,b,c,d$ and $f$ are constants that we later plug in. From here, one can produce closed-form expressions for the integral. It can be checked that
$$
\int  t \exp(-at^2-bt-c) dt
= -\frac{e^{-at^2 -bt -c}}{2 a} - 
\frac{\sqrt{\pi} e^{(b^2/4a)-c} \erf \left(\frac{2at+b}{2\sqrt{a}}\right) }{4a^{3/2}},
$$
and
$$
\int  \exp(-at^2-bt-c) dt
=  - 
\frac{\sqrt{\pi} e^{(b^2/4a)-c} \erf \left(\frac{2at+b}{2\sqrt{a}}\right)}{2 \sqrt{a}},
$$
where $\erf$ is the error function,
$$
\erf(t) = \frac{2}{\sqrt{\pi}} \int_0^z e^{-t^2} dt.
$$
From the indefinite integral, we can compute the definite integral by evaluating the expressions at $t=0$ and $t=1$. Based on the expression above, the appropriate definitions for the constants $a,b,c,d$ and $f$ are
\begin{align*}
    a &= (\x-\xb)^\top \mathbf{L}^{-2}  (\x-\xb),
    \\ b &= 2(\x-\xb)^\top \mathbf{L}^{-2}  (\xb-\x_n),
    \\ c &= (\xb-\x_n)^\top \mathbf{L}^{-2}  (\xb-\x_n),
    \\ d &= -\sigma^2_0 \frac{(x_i-\tilde{x}_i)^2}{\ell_i^2},  
    \\ f &=  -\sigma^2_0 \frac{(x_i-\tilde{x}_i)(\tilde{x}_i-x_{n,i})}{\ell_i^2}.
\end{align*}
After some algebra we deduce that
$$
A_{n,i}(\x) =  \frac{e^{\frac{-a-b-c}{2}} \left[\sqrt{2 \pi } e^{\frac{(2 a+b)^2}{8 a}} \left(\text{erf}\left(\frac{b}{2 \sqrt{2a}}\right)-\text{erf}\left(\frac{2 a+b}{2 \sqrt{2a}}\right)\right) (b d-2 a f)+4 \sqrt{a} d \left(e^{\frac{a+b}{2}}-1\right)\right]}{4 a^{3/2}},
$$
and thus
$$
\mean(attr_i(\x)) = \mathbf{attr}_i(\x|\mathbf{k})^\top \boldsymbol{\alpha} = \sum_{n=1}^N \alpha_n A_{n,i}(\x).
$$
Computation of the variance is also tractable, but requires some more work. Let $\mathbf{\Lambda} = (\mathbf{K} + \sigma^2_n \mathbf{I})^{-1}$. Then we observe that
$$
\var(attr_i(\x)) = B_i(\x) - \mathbf{attr}_i(\x|\mathbf{k})^\top \mathbf{\Lambda} \mathbf{attr}_i(\x|\mathbf{k}),
$$
where we have defined 
$$
B_i(\x) \stackrel{\Delta}{=} (x_i-\tilde{x}_i)^2 \int_0^1 \int_0^1 
\frac{\partial^2 k(\bgamma(s),\bgamma(t))}{\partial x_i \partial x_i'} ds dt.
$$
We already know $\mathbf{attr}_i(\x|\mathbf{k})$ from computing the mean attribution, and all that remains is to compute $B_i(\x)$. Like before, our goal is to convert $B_i(\x)$ into a generic expression and then use a known result to solve the integral. Firstly, we compute the second derivative, which we may express as 
\begin{align*}
\frac{\partial^2 k(\x,\x')}{\partial x_i \partial x_i'}
&=
\sigma^2_0  
\frac{\partial^2}{\partial x_i \partial x_i'}
\exp \left(-\frac{1}{2} \sum_{i=1}^D \frac{(x_i-x_i')^2}{\ell_i^2}\right)
\\ & = \sigma^2_0 \exp \left(-\frac{1}{2} \sum_{i=1}^D \frac{(x_i-x_i')^2}{\ell_i^2}\right) \left( \frac{1}{\ell_i^2} - \frac{(x_i-x_i')^2}{\ell_i^4} \right)
\\ & = \sigma^2_0 \exp \left(-\frac{(\x-\x')^\top \mathbf{L}^{-2} (\x-\x')}{2} \right) \left( \frac{1}{\ell_i^2} - \frac{(x_i-x_i')^2}{\ell_i^4} \right).
\end{align*}
The next step is to precompose and integrate. When $\x\neq \x'$, the general integral is complicated. The formulas simplify considerably if we constrain our attention to the case in which $\x=\x'$, meaning that we only compute the variance of an attribution, and not the full covariance function. In this case, we let $\bgamma(t)=\xb + t(\x-\xb)$, and then precomposition yields
\begin{align*}
\frac{\partial^2 k(\bgamma(s),\bgamma(t))}{\partial x_i \partial x_i'}
& = \sigma^2_0 \exp \left(-\frac{(\bgamma(s)-\bgamma(t))^\top \mathbf{L}^{-2} (\bgamma(s)-\bgamma(t))}{2} \right) \left( \frac{1}{\ell_i^2} - \frac{(\gamma_i(s)-\gamma_i(t))^2}{\ell_i^4} \right) \\
& = \sigma^2_0 \exp \left(-\frac{(s-t)^2 (\x-\xb)^\top \mathbf{L}^{-2} (\x-\xb)}{2} \right) \left( \frac{1}{\ell_i^2} - \frac{(s-t)^2(x_i-\tilde{x}_i)^2}{\ell_i^4} \right) \\
& =  \exp \left(-\frac{a(s-t)^2}{2} \right) \left( w  + v(s-t)^2 \right),
\end{align*}
where $a= (\x-\xb)^\top \mathbf{L}^{-2} (\x-\xb)$ is the same as in the calculation for the mean, and where we additionally defined
\begin{align*}
    v &= -\sigma^2_0 \frac{(x_i-\tilde{x}_i)^2}{\ell_i^4} ,
    \\ w &= \frac{\sigma^2_0}{\ell_i^2}.
\end{align*}
In this form, the integral can be evaluated, yielding 
$$
\int_0^1 \int_0^1 e^{-a(s-t)^2/2}(w+v(s-t)^2) ds dt
=
\frac{\sqrt{2 \pi } \text{erf}\left(\frac{\sqrt{a}}{\sqrt{2}}\right) (a w+v)}{a^{3/2}}+\frac{2 e^{-\frac{a}{2}} \left(e^{a/2}-1\right) (a w+2 v)}{a^2}.
$$
We may then write
\begin{align*}
    B_i(\x) 
    =
    (x_i-\tilde{x}_i)^2 
    \left( \frac{\sqrt{2 \pi } \text{erf}\left(\frac{\sqrt{a}}{\sqrt{2}}\right) (a w+v)}{a^{3/2}}-\frac{2 e^{-\frac{a}{2}} \left(e^{a/2}-1\right) (a w+2 v)}{a^2} \right).
\end{align*}

\subsection*{Random Feature GPs}
To derive the RFGP attributions, we can use Theorem \ref{thm:gpattr} directly with 
\eqref{eq:rfgpmean} and \eqref{eq:rfgpvar}.
From \eqref{eq:rfgpmean}, we see that the RFGP mean can be expressed as a weighted sum. If we define the vector $\boldsymbol{\alpha} \stackrel{\Delta}{=} \mathbf{A}^{-1}\boldsymbol{\Phi}\mathbf{y}$, then we may write
$$
\boldsymbol{\phi}(\x)^\top \mathbf{A}^{-1}\boldsymbol{\Phi}\mathbf{y}
= \boldsymbol{\phi}(\x)^\top  \boldsymbol{\alpha}
= \sum_{m=1}^{2M} \phi_m (\x) \alpha_m,
$$
where $\alpha_m$ are constants and 
$$
\phi_m (\x) = 
\begin{cases}
			\sin(\x^\top \mathbf{v}_{(m+1)/2}), & \text{if $m$ odd}\\
            \cos(\x^\top \mathbf{v}_{m/2}), & \text{otherwise}
\end{cases}.
$$
In either case, $\phi_m(\x)$ is of the form $\sin(\x^\top \mathbf{v} )$ or $\cos(\x^\top \mathbf{v} )$, for some value of $\mathbf{v}$. By linearity, it suffices to compute the attribution for this generic form, and then plug in to find the resulting expression. 

Since $\cos(x)$ can be represented as $\sin(x+b)$ where $b=\pi/2$, we show the generic calculation for $\sin(\x^\top \mathbf{v} + b)$ that will produce the desired result for both the sine and cosine cases.
Let $\phi(\x) = \sin(\x^\top \mathbf{v} + b)$. Then
\begin{align*}
attr_i(\x|\phi) &= (x_i-\tilde{x}_i) \int_0^1 \frac{\partial \phi(\xb+t(\x-\xb))}{\partial x_i} dt
\\&= (x_i-\tilde{x}_i) \int_0^1  \cos ((\xb+t(\x-\xb))^\top \mathbf{v} + b) v_i dt
\\&= (x_i-\tilde{x}_i) \left[
\sin ((\xb+t(\x-\xb))^\top \mathbf{v} + b) 
\right|_{t=0}^1 \frac{v_i}{(\x-\xb)^\top \mathbf{v}}
\\&= (x_i-\tilde{x}_i) \left( 
\sin (\x^\top \mathbf{v} + b)
-\sin (\xb^\top \mathbf{v} + b)
\right) \frac{v_i}{(\x-\xb)^\top \mathbf{v}}.
\end{align*}
From this calculation, we can derive the expression for the vector of $\phi$ attributions:
$$
\mathbf{attr}_i(\x|\boldsymbol{\phi})
= \begin{bmatrix}
    attr_i(\x|\phi_1) \\ \vdots \\ attr_i(\x|\phi_{2M})
\end{bmatrix}
= (x_i-\tilde{x}_i) 
    \begin{bmatrix}
    \frac{v_{1,i} }{\mathbf{v}_1^\top (\x-\xb)} (\cos( \mathbf{v}_1^\top \x)-\cos( \mathbf{v}_1^\top \xb))\\
    \frac{v_{1,i} }{\mathbf{v}_1^\top (\x-\xb)}(\sin( \mathbf{v}_1^\top \x)-\sin( \mathbf{v}_1^\top \xb)) \\
    \vdots \\
    \frac{v_{M,i}}{\mathbf{v}_M^\top (\x-\xb)}\left( \cos( \mathbf{v}_M^\top \x)  - \cos( \mathbf{v}_M^\top \xb) \right)\\
    \frac{v_{M,i}}{\mathbf{v}_M^\top (\x-\xb)} \left( \sin( \mathbf{v}_M^\top \x) - \sin( \mathbf{v}_M^\top \xb)\right) 
    \end{bmatrix}
$$
and thus
$$
\mean(attr_i(\x)) = \mathbf{attr}_i(\x|\boldsymbol{\phi})^\top \boldsymbol{\alpha}.
$$
In the main text, we use the notation 
$$
\boldsymbol{\zeta}(\x) = 
    \begin{bmatrix}
    \frac{v_{1,i} }{\mathbf{v}_1^\top (\x-\xb)} (\cos( \mathbf{v}_1^\top \x)-\cos( \mathbf{v}_1^\top \xb))\\
    \frac{v_{1,i} }{\mathbf{v}_1^\top (\x-\xb)}(\sin( \mathbf{v}_1^\top \x)-\sin( \mathbf{v}_1^\top \xb)) \\
    \vdots \\
    \frac{v_{M,i}}{\mathbf{v}_M^\top (\x-\xb)}\left( \cos( \mathbf{v}_M^\top \x)  - \cos( \mathbf{v}_M^\top \xb) \right)\\
    \frac{v_{M,i}}{\mathbf{v}_M^\top (\x-\xb)} \left( \sin( \mathbf{v}_M^\top \x) - \sin( \mathbf{v}_M^\top \xb)\right) 
    \end{bmatrix}
    $$
so that we may compactly write
$$
\mean(attr_i(\x)) = (x_i-\tilde{x}_i)\boldsymbol{\zeta}^\top \boldsymbol{\alpha}
= (x_i-\tilde{x}_i)\boldsymbol{\zeta}
^\top \mathbf{A}^{-1}\boldsymbol{\Phi}\mathbf{y}.
$$
This notation is also useful for deriving the covariance, since when we write
$$
\cov(attr_i(\x),attr_i(\x'))
= (x_i-\tilde{x}_i)(x_i'-\tilde{x}_i) \int_0^1 \int_0^1 \frac{\partial^2 k(\xb + s(\x-\xb), \xb + t(\x'-\xb))}{\partial x_i \partial x_i'} dt ds,
$$
for $k(\x,\x') = \sigma^2_n \boldsymbol{\phi}(\x)^\top \mathbf{A}^{-1} \boldsymbol{\phi}(\x')$, we can split up the derivatives and integrals.
Without writing everything out, the desired result is 
$$
\cov(attr_i(\x),attr_i(\x'))
= \sigma^2_n \mathbf{attr}_i(\x|\boldsymbol{\phi})^\top \mathbf{A}^{-1}  \mathbf{attr}_i(\x'|\boldsymbol{\phi}),
$$
which immediately yields
$$
\cov(attr_i(\x),attr_i(\x'))
= \sigma^2_n (x_i-\tilde{x}_i)(x_i'-\tilde{x}_i) \boldsymbol{\zeta}(\x)^\top \mathbf{A}^{-1} \boldsymbol{\zeta}(\x').
$$

\section{Additional Experiments}
In this appendix, we include some additional experiments and validation that we omit from the main text.

\subsection{Wine: Quadrature Rules}
\label{app:quadrature}
In this section, we provide some additional figures to analyze the convergence of the approximated integral.
First, we visualize the convergence of the approximated integral to the true IG attributions, for the Wine data set in Figure \ref{fig:fig4_suppl}, as only one feature was shown in Figure \ref{fig:approx_integral}. Figure \ref{fig:fig4_suppl} shows that the rate of convergence is similar across all input features, and does not obviously correspond to the magnitude of the exact attribution. 
The ARD-SE kernel also provides a measure of feature importance, where relevance of a (normalized) feature is defined to be $r_i = 1/\ell_i$. In this case, we observe that for features with relevance, their attributions converge much more quickly. This is expected since a low feature relevance indicates that model training has preferentially underused a given feature in making predictions.

\begin{figure}[h]
    \centering\includegraphics[width=0.85\textwidth]{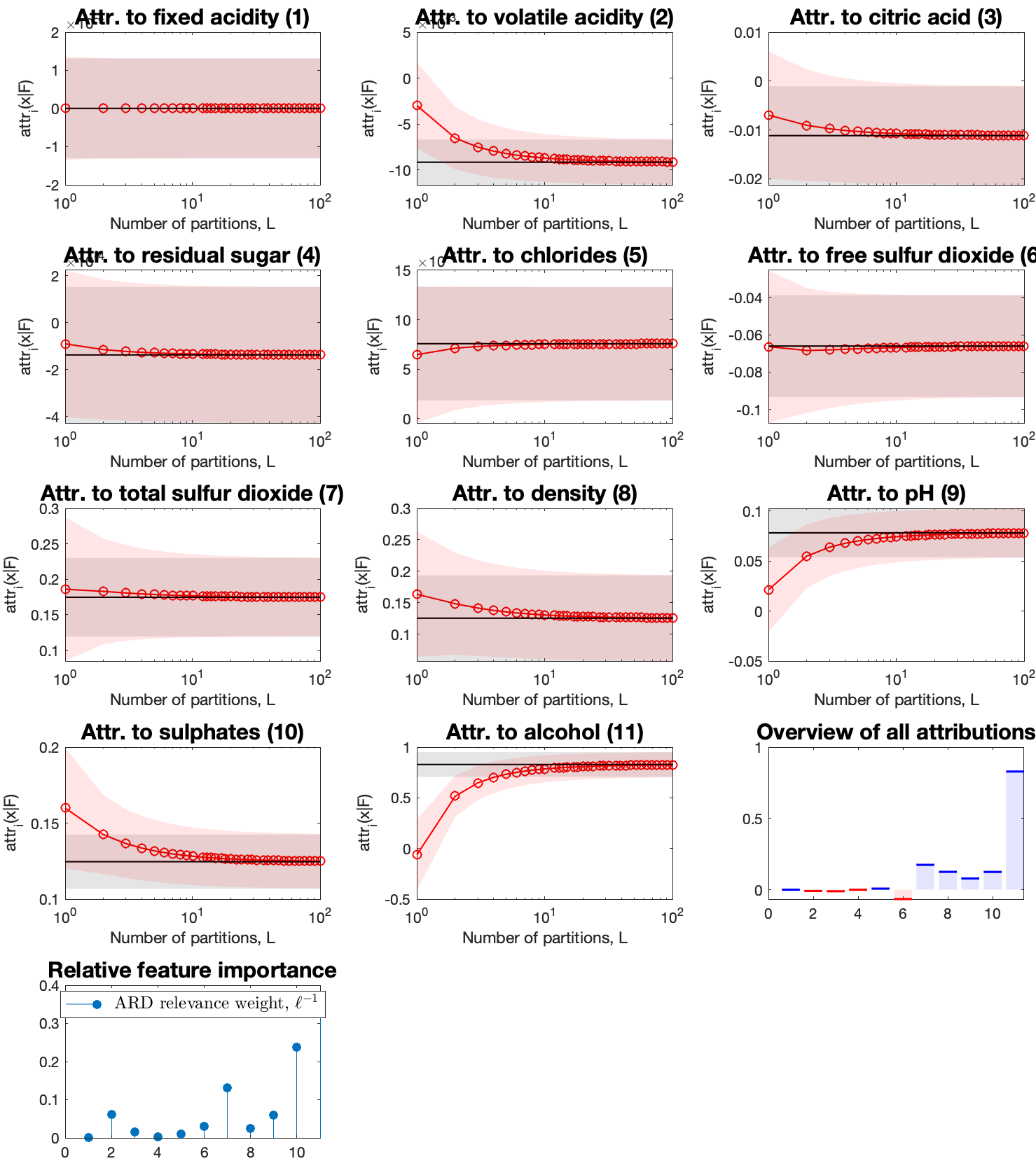}
    \caption{Convergence of the attributions with approximated integration to the exact attributions, in the Wine data set. The first 11 subfigures shown the approximation convergence, as described in Figure \ref{fig:approx_integral}. The last two subfigures show the relative sizes of each attribution in the exact result, and also the relative importance of each input feature (as measured by the ARD kernel), respectively.}
    \label{fig:fig4_suppl}
\end{figure}

Second, we visualize the impact of different quadrature rules. The formula in \eqref{eq:approxIG} uses the right-hand rule to approximate the integral of a function on the interval $[0,1]$, but there exist other quadrature rules, or methods for numerical integration, that may achieve noticeable improvements in performance with only changes to the weights in the sum used to approximate the integral \cite[]{heath2018scientific}. 
In Figure \ref{fig:three graphs}, we compare the application of the right-hand rule with two other quadrature rules: the trapezoid rule and Simpson's rule. Each method approximates the integral using a partition of the unit interval:
$$
0 = t_0 < t_1 < \cdots <t_{L-1} < t_L = 1.
$$
For each $l=1,...,L$, each quadrature approximates a portion of the integral along that segment. For a general function $g(t)$, the three quadratures use the partition to approximate the integral differently:
$$
\int_0^1 g(t) dt \approx \sum_{l=1}^L S_l
$$
\begin{align*}
    &\text{Right-hand:} & S_l = g(t_l), 
    \\
    &\text{Trapezoid:} & S_l = \frac{1}{2} \left(g(t_{l-1}) + g(t_l)\right), 
    \\
    &\text{Simpson's rule:} & S_l = \frac{1}{4} \left( g(t_{l-1}) + 2g\left(\frac{t_{l-1}+t_l}{2}\right)  + g(t_l) \right).
\end{align*}
We note that Simpson's rule includes an extra evaluation per partition. As a result, Simpson's rule should be compared to other methods relative to the number of function calls, instead of $L$.
In Figure \ref{fig:three graphs}, we compare the convergence of each quadrature rule against the number of partitions $L$ as well as the number of required function evaluations. We additionally visualize the mean-squared error (MSE) of the means of each attribution GP. Both the trapezoid rule and Simpson's rule demonstrate a marked improvement over the right-hand rule.

\begin{figure}
     \centering
     \begin{subfigure}[b]{0.32\textwidth}
         \centering
         \includegraphics[width=\textwidth]{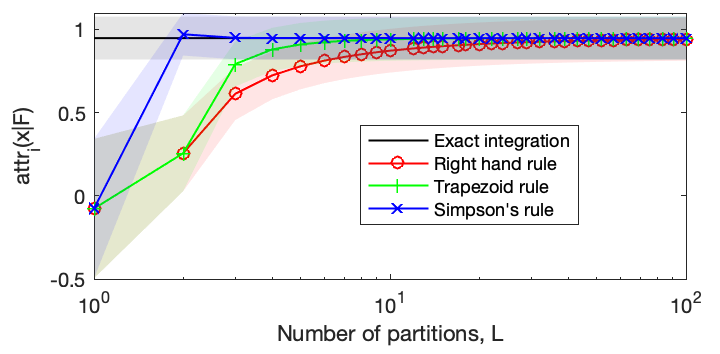}
         \caption{Attribution GPs vs the number of partitions $L$.}
         \label{fig:y equals x}
     \end{subfigure}
     \hfill
     \begin{subfigure}[b]{0.32\textwidth}
         \centering
         \includegraphics[width=\textwidth]{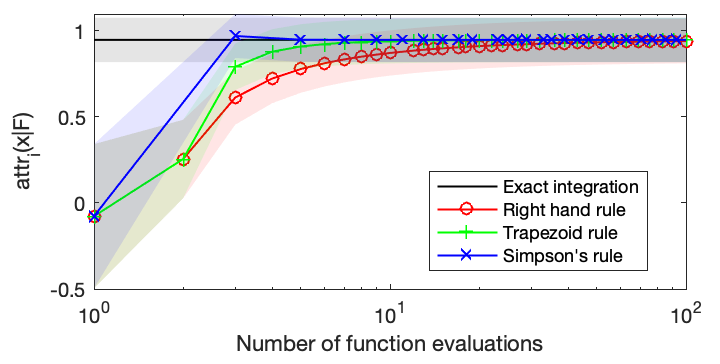}
         \caption{Attribution GPs vs the number of function evaluations.}
         \label{fig:three sin x}
     \end{subfigure}
     \hfill
     \begin{subfigure}[b]{0.32\textwidth}
         \centering
         \includegraphics[width=\textwidth]{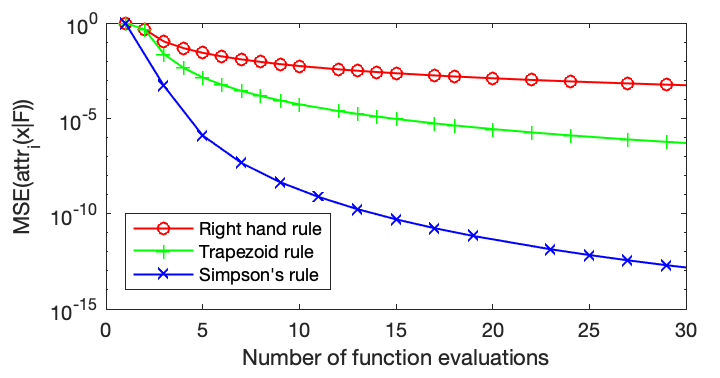}
         \caption{MSE when estimating the mean attribution.}
         \label{fig:five over x}
     \end{subfigure}
        \caption{Comparison of the three quadrature rules applied to compute the attributions of a GPR model with ARD-SE kernel. Each method was tasked with computing the attribution to Alcohol, as seen in Figure \ref{fig:fig4_suppl}. }
        \label{fig:three graphs}
\end{figure}

\vskip 0.2in
\addcontentsline{toc}{section}{References}
\bibliography{refs}

\end{document}